\pdfoutput=1
\PassOptionsToPackage{numbers, compress}{natbib}
\documentclass{article}
\usepackage[final]{neurips_2021}
\usepackage{cite}

\usepackage{microtype}
\usepackage{graphicx}
\usepackage{subfigure}
\usepackage{booktabs} 
\usepackage{bm}

\usepackage{hyperref}
\usepackage{url}
\usepackage{booktabs}       
\usepackage{amsfonts}       
\usepackage{nicefrac}       
\usepackage{microtype}      
\usepackage{amsmath, amsthm, amssymb}
\usepackage{bm}
\usepackage{caption}
\usepackage{xcolor}
\usepackage{mathtools}
\usepackage[ruled,vlined]{algorithm2e}
\usepackage{algorithmic}
\usepackage{wrapfig}
\usepackage{hyperref}
\usepackage{xr}
\usepackage{thm-restate}
\usepackage{float}


\newcommand{\name}{{\textrm{VACL}}}


\usepackage{amsmath,amsfonts,bm,amssymb,mathtools,amsthm}
\usepackage{color,xcolor,xspace}
\usepackage{booktabs}
\usepackage{thm-restate}

\newtheorem{proposition}{Proposition}
\newtheorem{lemma}{Lemma}
\newtheorem{assumption}{Assumption}


\newcommand{\bb}[1]{{\mathbb{#1}}}
\newcommand{\norm}[1]{{\lVert {#1} \rVert}}








\def\eqref#1{Eq.(\ref{#1})}
\def\Eqref#1{Equation~(\ref{#1})}








\def\1{\bm{1}}










\DeclareMathAlphabet{\mathsfit}{\encodingdefault}{\sfdefault}{m}{sl}
\SetMathAlphabet{\mathsfit}{bold}{\encodingdefault}{\sfdefault}{bx}{n}


\def\gB{{\mathcal{B}}}

\def\gH{{\mathcal{H}}}

\def\gL{{\mathcal{L}}}
\def\gM{{\mathcal{M}}}

\def\gP{{\mathcal{P}}}
\def\gQ{{\mathcal{Q}}}










\newcommand{\R}{\mathbb{R}}

\newcommand{\KL}{D_{\mathrm{KL}}}



\DeclareMathOperator*{\argmax}{arg\,max}

\newcommand{\supp}{\mathrm{supp}}

\usepackage{braket}

\renewcommand{\cite}{\citep}
\newcommand{\Qsol}{\mathcal{Q}_{\textrm{sol}}}
\newcommand{\Qact}{\mathcal{Q}_{\textrm{act}}}




\usepackage[utf8]{inputenc} 
\usepackage[T1]{fontenc}    
\usepackage{hyperref}       
\usepackage{url}            
\usepackage{booktabs}       
\usepackage{amsfonts}       
\usepackage{nicefrac}       
\usepackage{microtype}      
\usepackage{xcolor}         
\usepackage{multirow}

\title{Variational Automatic Curriculum Learning for Sparse-Reward Cooperative Multi-Agent Problems}

%

\author{Jiayu Chen $^{1\sharp}$,  
Yuanxin Zhang$^{1}$,  
Yuanfan Xu$^{1}$,  
Huimin Ma$^{3}$, 
\\
\textbf{Huazhong Yang$^{1}$,
Jiaming Song$^{4}$, 
Yu Wang$^{1}$, 
Yi Wu$^{12\natural}$} \\
$^1$ Tsinghua University, $^2$ Shanghai Qi Zhi Institute,  \\ 
$^3$ University of Science and Technology Beijing, $^4$ Stanford University,  \\
$^{\sharp}$\texttt{jia768167535@gmail.com}, $^{\natural}$\texttt{jxwuyi@gmail.com} \\
}

\begin{document}

\maketitle

\begin{abstract}
We introduce a curriculum learning algorithm, \emph{Variational Automatic Curriculum Learning} (\name), for solving challenging goal-conditioned cooperative multi-agent reinforcement learning problems. We motivate our paradigm through a variational perspective, where the learning objective can be decomposed into two terms: task learning on the current task distribution, and curriculum update to a new task distribution. Local optimization over the second term suggests that the curriculum should gradually expand the training tasks from easy to hard. Our {\name} algorithm implements this variational paradigm with two practical components, task expansion and entity progression, which produces training curricula over both the task configurations as well as the number of entities in the task. Experiment results show that {\name} solves a collection of sparse-reward problems with a large number of agents. Particularly, using a single desktop machine, {\name} achieves 98\% coverage rate with 100 agents in the simple-spread benchmark and reproduces the ramp-use behavior originally shown in OpenAI’s hide-and-seek project.
Our project website is at \url{https://sites.google.com/view/vacl-neurips-2021}.

\end{abstract}

\section{Introduction}\label{sec:intro}
Building intelligent agents in complex multi-agent games remains a long-standing challenge in artificial intelligence~\cite{panait2005cooperative}. Recently, it has been a trend to apply multi-agent reinforcement learning (MARL) to extremely challenging multi-agent games, such as Dota II~\cite{berner2019dota}, StarCraft II~\cite{vinyals2019grandmaster} and  Hanabi~\cite{bard2020hanabi}.
Despite these successes, learning intelligent multi-agent policies in general still remains a great RL challenge. Multi-agent games allow sophisticated interactions between agents and environment. Feasible solutions may require non-trivial intra-agent coordination, which leads to substantially more complex strategies than the single-agent setting. Moreover, as the number of agents increases, the joint action space grows at an exponential rate, which results in an exponentially large policy search space. Thus, most existing MARL applications  typically require shaped rewards, or assume simplified environment dynamics, or only handle a limit number of agents.

We tackle goal-conditioned cooperative MARL problems with \emph{sparse} rewards through a novel variational inference perspective. Assuming each task can be parameterized by a continuous representation, we introduce a variational proposal distribution over the task space and then decomposing the overall training objective into two separate terms, i.e., a policy improvement term and a task proposal update term. By treating the proposal distribution updates as the training curriculum, such a variational objective naturally suggests a curriculum learning framework by alternating between curriculum update and MARL training. 
In addition, we propose a continuous relaxation technique for discrete variables so that the variational curricula can be also applied over the number of agents and objects in the task, which leads to a generic and unified training paradigm for the cooperative MARL setting.

We implement our variational training paradigm through a computationally efficient algorithm, \emph{\underline{V}ariational \underline{A}utomatic \underline{C}urriculum \underline{L}earning (\name)}. {\name} consists two components, \emph{task expansion} and \emph{entity progression}, which generate a series of effective training tasks in a hierarchical manner. Intuitively, \emph{entity progression} leverages the inductive bias that tasks with more entities are usually more difficult in MARL and, therefore, progressively increases the entity size in the environment. \emph{Task expansion} assumes a fixed number of entities and is motivated by Stein variational gradient descent (SVGD)~\cite{liu2016stein} to efficiently expand the task distribution towards the entire task space. 

We provide thorough discussions on the differences between {\name} and existing works (Sec.~\ref{sec:discuss}) and empirically validate {\name} using a single desktop machine in four physical multi-agent cooperative games, i.e., two in the multi-agent particle-world environment (MPE)~\cite{lowe2017multi} and two in the MuJoCo-based hide-and-seek environment (HnS)~\cite{baker2020emergent} (Sec.~\ref{sec:expr}).
In the MPE games, {\name} learns significantly faster and better than all baselines and achieves over \textbf{98\% coverage rate} with \textbf{100 agents} in the \emph{simple-spread} testbed even using \textbf{sparse rewards}. In the HnS scenarios,  {\name} achieves over 90\% success rates on both two games, including reproducing the \emph{ramp use} behavior discovered in \cite{baker2020emergent} and solving a sparse-reward multi-agent \emph{Lock-and-Return} challenge where none of the baselines can produce a success rate above 15\% using the same amount of training timesteps.

\section{A Variational Perspective on Curriculum Learning}\label{sec:method}

\subsection{Preliminary}\label{sec:background}
We formulate each goal-conditioned cooperative multi-agent task as a multi-agent Markov decision process (MDP), $M(n,\phi)=<n,\phi, \mathcal{S},\mathcal{A},\mathcal{O}, O,P,R,s^0_\phi,g_\phi,\gamma>$, parameterized by a discrete variable $n$ (e.g., the number of agents\footnote{$n$ can also be a discrete vector where each dimension represents the number of a particular type of objects. We now simply assume $n$ is a single integer denoting the agent number for conciseness. }) and a continuous parameter $\phi \in \Phi$ (i.e., the initial state and goal; details in App.~C.2). $\mathcal{S}$ is the state space. $\mathcal{A}$ is the shared action space for each agent. $\mathcal{O}$ is the observation space. $o_i=O(s;i)$ denotes the observation for agent $i$ under state $S$. 
$s^0_\phi$ denotes the initial state and $g_\phi$ denotes the goal. $P(s'|s,A)$ and $R(s,A;g_\phi)$ are the transition probability and the goal-conditioned reward function given state $s$ and joint actions $A=(a_1,\ldots,a_n)$. $\gamma$ is the discounted factor. 

We consider homogeneous agents and learn a shared goal-conditioned policy $\pi_\theta(a_i|o_i,g)$  parameterized by $\theta$ for each agent $i$. The final objective is to maximize the expected accumulative reward over the entire task space, i.e., 
\begin{align}
J(\theta)=\mathbb{E}_{n,\phi,a_i^t,s^t}\left[\sum_t \gamma^t R(s^t,A^t;g_\phi)\right]=\mathbb{E}_{n,\phi}[V(n,\phi,\pi_\theta)],
\end{align}
where $V(n,\phi,\pi_\theta)$ denotes the value function for $\pi_\theta$ over task $M(n,\phi)$. 
We remark that for problems with 0/1 sparse rewards, $V(n,\phi,\pi_\theta)$ can be also interpreted as the success rate of $\pi_\theta$\footnote{This only rigorously holds when $\gamma=1$. But let's assume it for simplicity. In practice, this statement approximately holds with a close-to-1 $\gamma$ and a relatively short horizon.}. 

The main idea of curriculum learning is to construct a task sampler $q(n,\phi)$ that always generates training tasks $M(n,\phi)$ that yield the largest learning progress for  $\pi_\theta$ to efficiently maximize $J(\theta)$.


\subsection{Stein Variational Inference over a Continuous Task Space}\label{sec:method:continuous}
For simplicity, we ignore the discrete variable $n$ and only focus on the continuous variable $\phi$ in this subsection. Herein, our objective is to maximize the expectation $\mathcal{L}=\mathbb{E}_{\phi\sim p(\phi)}\left[V(\phi,\pi)\right]$, where $p(\phi)$ is a uniform distribution over feasible values of $\phi$. The subscript $\theta$ of $\pi$ is omitted for conciseness.

Given the objective  $\gL$, we motivate our automated curriculum learning procedure through the lens of variational inference. Let $q(\phi)$ denote the current distribution of training tasks, which will vary throughout learning; we can represent curriculum learning  through an objective that optimizes $q(\phi)$. The following lower bound for the original objective $\mathcal{L}$ can be derived:
\begin{align}
    \mathcal{L} & = \mathbb{E}_{\phi\sim p}[V(\phi,\pi)] = \mathbb{E}_{\phi\sim q}\left[\frac{p(\phi)}{q(\phi)} V(\phi,\pi)\right] = \mathbb{E}_{\phi\sim q}\left[V(\phi,\pi)+\left(\frac{p(\phi)}{q(\phi)} - 1\right) V(\phi,\pi)\right]\\ 
    & \geq \underbrace{\mathbb{E}_{\phi\sim q(\phi)}\left[V(\phi,\pi)\right]}_{\mathcal{L}_1: \text{ policy update}} + \underbrace{\mathbb{E}_{\phi\sim q(\phi)}\left[V(\phi,\pi) \log \frac{p(\phi)}{q(\phi)} \right]}_{\mathcal{L}_2: \text{ curriculum update}} \label{eq:vi-obj}
\end{align}
where the inequality is due to $x - 1 \geq \log x$, with equality achieved at $p(\phi) = q(\phi)$ for all $\phi$.

Now, we have decomposed $\mathcal{L}(\pi, q)$ into a lower bound that contains two terms $\mathcal{L}_1$ and $\mathcal{L}_2$. $\mathcal{L}_1$ is simply a policy update objective under the current task curriculum $q(\phi)$ while the second term $\mathcal{L}_2$ can be interpreted as a curriculum update objective by maximizing $q(\phi)$ w.r.t. the policy $\pi$.
Therefore, the decomposed lower bound naturally yields a curriculum learning framework by performing an iterative learning procedure, i.e., alternating between (1) maximizing $\mathcal{L}_1$ with $\pi$ (policy update) and (2) maximizing $\mathcal{L}_2$ with $q(\phi)$ (curriculum update). 
We remark that we only optimize $\pi$ w.r.t. $\mathcal{L}_1$ instead of $\mathcal{L}_1+\mathcal{L}_2$ for the purpose of stabilizing policy learning since $q$ can be very different from $p$ in the early stage of training. This also aligns with the curriculum learning paradigm. 
In the following proposition, we show that if we can perfectly optimize the RL procedure for $\gL_1$ under the curriculum $q(\phi)$, then the curriculum objective $\gL_2$ will encourage $q(\phi)$ to converge to $p(\phi)$. 
\begin{proposition}Assume that $V(\phi, \pi) \in [0, 1]$ and that there exists a unique policy $\pi^\star$ such that $V(\phi, \pi^\star) = 1$ for all $\phi$. 
For any $q \in \gP(\Phi) \setminus \{p\}$ (\textit{i.e.}, any curriculum that is not the final one) and $\pi \in \argmax_{\pi} \gL_1(\pi, q)$ (\textit{i.e.}, a policy that is optimal under the current curriculum), $\gL_2(\pi, q) < \gL_2(\pi^\star, p) = 0$.
\end{proposition}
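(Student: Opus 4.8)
The plan is to reduce both sides to closed forms and then invoke the strict positivity of the KL divergence. First I would evaluate the right-hand side: since $V(\phi,\pi^\star)=1$ for every $\phi$ and $\log\frac{p(\phi)}{p(\phi)}=0$, we get $\mathcal{L}_2(\pi^\star,p)=\mathbb{E}_{\phi\sim p}[1\cdot 0]=0$, confirming the stated equality.

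Next I would characterize the optimal policy under the curriculum $q$. Because $V(\phi,\pi)\le 1$ pointwise, we have $\mathcal{L}_1(\pi,q)=\mathbb{E}_{\phi\sim q}[V(\phi,\pi)]\le 1$ for every $\pi$, while $\pi^\star$ attains $\mathcal{L}_1(\pi^\star,q)=1$. Hence $\sup_\pi \mathcal{L}_1(\pi,q)=1$, and any $\pi\in\argmax_\pi \mathcal{L}_1(\pi,q)$ satisfies $\mathbb{E}_{\phi\sim q}[V(\phi,\pi)]=1$. Combining this with $V\le 1$, I would conclude that $V(\phi,\pi)=1$ for $q$-almost every $\phi$: the deficit $1-V(\phi,\pi)$ is nonnegative and has zero $q$-expectation, so it must vanish $q$-almost everywhere.

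Then I would substitute into $\mathcal{L}_2$. Using that $V(\phi,\pi)=1$ on the $q$-support, $\mathcal{L}_2(\pi,q)=\mathbb{E}_{\phi\sim q}\big[\log\frac{p(\phi)}{q(\phi)}\big]=-\KL(q\|p)$. Since $p$ is uniform over the feasible set, any admissible $q$ is absolutely continuous with respect to $p$, so $\KL(q\|p)$ is well defined in $[0,\infty]$; by Gibbs' inequality (strict convexity of $x\mapsto x\log x$) it is strictly positive whenever $q\ne p$. Therefore $\mathcal{L}_2(\pi,q)=-\KL(q\|p)<0=\mathcal{L}_2(\pi^\star,p)$, which is exactly the claim.

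The main obstacle is the measure-theoretic bookkeeping in the middle step: the optimal $\pi$ need only satisfy $V(\phi,\pi)=1$ on the support of $q$ and need not coincide with the globally optimal $\pi^\star$ off-support, so I must establish the $q$-almost-everywhere equality carefully and check that replacing $V$ by $1$ inside $\mathcal{L}_2$ is legitimate — which it is precisely because that expectation is also taken under $q$. A minor accompanying point is the degenerate case $\KL(q\|p)=+\infty$, where $\mathcal{L}_2(\pi,q)=-\infty<0$ holds trivially. I note that the uniqueness of $\pi^\star$ is not actually needed for this argument; only its existence together with the bound $V\le 1$ is used.
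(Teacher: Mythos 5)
Your proof is correct and follows essentially the same route as the paper's: identify that any $\pi\in\argmax_\pi\mathcal{L}_1(\pi,q)$ must satisfy $V(\phi,\pi)=1$ on the support of $q$ (because $\pi^\star$ already achieves the maximal value $1$), then reduce $\mathcal{L}_2(\pi,q)$ to $-\KL(q\Vert p)<0$ via Gibbs' inequality. Your version merely spells out the measure-theoretic details (the $q$-almost-everywhere argument, the $\KL=+\infty$ case, and the observation that uniqueness of $\pi^\star$ is unnecessary) that the paper leaves implicit.
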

\begin{proof} First, we have that $\gL_1(\pi^\star, p) = 1$ (definition of $\pi^\star$) and $\gL_2(\pi^\star, p) = 0$. Suppose that $\exists \pi, q$ such that $q \neq p$ and $\gL_1(\pi, q) = 1$; then it must be the case that whenever $V(\phi, \pi) \neq 1$, $q(\phi) = 0$, \textit{i.e.}, $\forall \phi \in \supp(q)$, $V(\phi, \pi) = 1$. Then $\gL_2(\phi, \pi) = -\KL(q \Vert p) < 0$, which completes the proof.
\end{proof}
In other words, if our policy $\pi$ already performs well under the current curriculum $q(\phi)$, then $\gL_2$ will encourage $q(\phi)$ to become ``closer'' to $p(\phi)$, aligning with our eventual target of learning all tasks.

\subsubsection{Stein Variational Gradient Descent for Curriculum Update}
The most straightforward approach for optimizing $\gL_2$ is to learn a generative model w.r.t. the current $\pi$, which can be particularly sample-inefficient. Instead, we consider updates to our current curriculum via smooth updates, which is much more efficient than learning generative models. 

Let us represent the task distribution under the current curriculum $q(\phi)$ with a set of particles $Q=\{\phi\}$. To update each particle $\phi$ in $\mathcal{Q}$, we consider an incremental transform $T(\phi) = \phi + \epsilon f(\phi)$, where $f(\phi)$ is a smooth function and $\epsilon$ is a magnitude scalar. Our goal is then to derive suitable updates to $\mathcal{Q}$ that maximizes $\gL_2(q, \pi)$ based on the transform $T$.
Following \cite{florensa2018automatic} and \cite{ren2019exploration}, we introduce an assumption over the generalization ability of the task space.
\begin{assumption}\label{assumption:1}
An optimal policy $\pi$ that is trained over a specific task $\phi$ has some generalization ability to another goal $\phi'$ close to $\phi$. Specifically, if $V(\phi, \pi) = c$ for any $\phi \in \supp(q)$, then $V(\phi', \pi) = c$ for all $\phi' \in \gB(\phi)$, where $\gB(\phi)$ is a small open ball around $\phi$.
\end{assumption}
In the context of curriculum learning, this assumption suggests that for any policy that is trained on a set of tasks, the performance (e.g., success rate) does not change if we only change the task parameters by an infinitesimal amount. In the following theorem, we derive the functional gradient of the curriculum update objective $\gL_2$ in a reproducing kernel Hilbert space (RKHS).

\begin{restatable}{theorem}{thmmain}
\label{thm:main}
Let $T(\phi) = \phi + \epsilon f(\phi)$ where $f$ is an element of some vector-valued RKHS of a positive definite kernel $k(\phi, \phi'): \Phi \times \Phi \to \R$, and $q_{[T]}$ the density of $\psi = T(\phi)$ when $\phi \sim q$, then
\begin{align}
    \nabla_f \gL_2(q_{[T]}, \pi) |_{f = 0} = f^*(\phi),
\end{align}
where $f^*(\cdot) = \bb{E}_{\phi' \in \mathcal{Q}}[V(\phi',\pi) (k(\phi', \cdot) \nabla_{\phi'} \log p(\phi') + \nabla_{\phi'} k(\phi', \cdot))]$. 
\end{restatable}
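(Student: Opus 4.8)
The plan is to mirror the Stein variational gradient descent derivation of \cite{liu2016stein}, adapting it to the value-weighted objective $\gL_2$ and invoking Assumption~\ref{assumption:1} at the one step where the reinforcement-learning structure (rather than pure SVGD) enters. First I would push the objective back onto the original particles: since $\psi = T(\phi)$ with $\phi \sim q$, a change of variables gives $\gL_2(q_{[T]}, \pi) = \bb{E}_{\phi \sim q}[V(T(\phi),\pi)\log(p(T(\phi))/q_{[T]}(T(\phi)))]$, and the density-transformation identity $q_{[T]}(T(\phi)) = q(\phi)/|\det \nabla_\phi T(\phi)|$ turns this into $\bb{E}_{\phi\sim q}[V(T(\phi),\pi)(\log p(T(\phi)) - \log q(\phi) + \log|\det \nabla_\phi T(\phi)|)]$. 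The virtue of this form is that all $\epsilon$-dependence now lives inside $T(\phi) = \phi + \epsilon f(\phi)$, so the functional derivative reduces to an ordinary $\epsilon$-derivative at $\epsilon = 0$.

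Next I would differentiate this expression in $\epsilon$ and evaluate at $\epsilon = 0$ (equivalently, take the first variation in $f$ at $f=0$), using three elementary facts: $T(\phi)|_{\epsilon=0} = \phi$; $\tfrac{d}{d\epsilon}T(\phi)|_{\epsilon=0} = f(\phi)$; and, by Jacobi's formula applied to $\nabla_\phi T = \mI + \epsilon\,\nabla_\phi f$, $\tfrac{d}{d\epsilon}\log|\det \nabla_\phi T(\phi)|\big|_{\epsilon=0} = \Tr(\nabla_\phi f(\phi))$. The $-\log q(\phi)$ summand carries no direct $\epsilon$-dependence, and at $\epsilon=0$ the log-determinant term is itself zero, so the only surviving contributions come from the score term and the trace term.

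The main obstacle is the prefactor $V(T(\phi),\pi)$, which also depends on $\epsilon$ and whose derivative $\nabla_\phi V(\phi,\pi)\cdot f(\phi)$ would otherwise contaminate the clean Stein form (it would multiply both the $\log p$ and $-\log q$ factors). This is exactly where Assumption~\ref{assumption:1} does its work: local constancy of $V$ on $\supp(q)$ forces $\nabla_\phi V(\phi,\pi) = 0$ there, so every term carrying the $V$-derivative vanishes. I would state this step carefully, since it is the one place the curriculum/RL assumption is indispensable. What remains is $\tfrac{d}{d\epsilon}\gL_2\big|_{\epsilon=0} = \bb{E}_{\phi\sim q}[V(\phi,\pi)(\nabla_\phi \log p(\phi)\cdot f(\phi) + \Tr(\nabla_\phi f(\phi)))]$, the value-weighted expectation of the trace of the Stein operator applied to $f$.

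Finally I would invoke the reproducing property of the vector-valued RKHS $\gH$ to rewrite this linear functional of $f$ as an inner product $\langle f^*, f\rangle_\gH$, whence $f^*$ is read off directly. Componentwise reproduction gives $\nabla_\phi \log p(\phi)\cdot f(\phi) = \langle f,\, k(\phi,\cdot)\nabla_\phi \log p(\phi)\rangle_\gH$, and the derivative-reproducing property gives $\Tr(\nabla_\phi f(\phi)) = \langle f,\, \nabla_\phi k(\phi,\cdot)\rangle_\gH$; combining these identifies $f^*(\cdot) = \bb{E}_{\phi\sim q}[V(\phi,\pi)(k(\phi,\cdot)\nabla_\phi\log p(\phi) + \nabla_\phi k(\phi,\cdot))]$, which is the claimed gradient once $\bb{E}_{\phi\sim q}$ is interpreted as the empirical average over the particle set $\gQ$. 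Beyond Assumption~\ref{assumption:1}, the remaining care is routine: justifying differentiation under the expectation, and assuming $p$ and $k$ are continuously differentiable so that the score $\nabla_\phi \log p$, the kernel gradient $\nabla_\phi k(\phi,\cdot)$, and the derivative-reproducing identity are all well defined.
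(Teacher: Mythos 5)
Your proposal is correct and follows essentially the same route as the paper's own proof: a change of variables with the log-determinant Jacobian term, Assumption~\ref{assumption:1} to annihilate the $\nabla_\phi V$ contamination, and the (derivative-)reproducing property of the RKHS to read off the Riesz representer $f^*$. The only difference is organizational --- the paper first proves a lemma giving the trace-form $\epsilon$-derivative and then computes the functional gradient $\nabla_f F[f]$ at general $f$ before setting $f=0$, whereas you evaluate the directional derivative at $f=0$ directly, which is all the theorem requires.
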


Proof is in App.~A. While the above statement is very similar to that used for Stein variational gradient descent~\cite{liu2016stein}, the update for our case also directly depends on the value $V(\phi', \pi)$, and thus indirectly depends on the current policy. Intuitively, if $V(\phi', \pi)$ is large, then $\phi'$ will contribute more to the curriculum update, whereas if $V(\phi', \pi) = 0$, then $\phi'$ will not contribute to the update.

Since we assume that $p(\phi)$ is uniform over the task space, $\nabla_{\phi'} \log p(\phi') = 0$ for the interior of $\Phi$, and the task update becomes 
\begin{align}
f^*(\cdot) = \bb{E}_{\phi' \in \mathcal{Q}}[V(\phi',\pi) \cdot \nabla_{\phi'} k(\phi', \cdot)]. \label{eq:svgd}
\end{align}
For example, we take the RBF kernel $k(\phi, \phi') = \exp(-\frac{1}{h}\norm{\phi - \phi'}_2^2)$, and then Eq.~(\ref{eq:svgd}) will drive $\phi$ away from neighbouring points that have large $k(\phi, \phi')$, and the repelling force scales with $V(\phi', \pi)$. If $V(\phi', \pi) = 0$, \textit{i.e.}, $\phi'$ has not been solved at all, then there is no repelling force from $\phi'$ to $\phi$. Intuitively, this process encourages the curriculum update to prioritize tasks that have not been fully solved yet, by exploring in the vicinity of existing tasks with high values. 


\subsection{Continuous Relaxation for Discrete Parameter}\label{sec:method:discrete}
Now we switch to the discrete parameter $n$ in our task representation, which results in a non-continuous task space and can be difficult to directly combine with continuous optimization techniques. Therefore, we propose to utilize a continuous variable alternative $z$ to represent the discrete variable $n$ so that the same variational inference technique for $\phi$ can be applied here.
Specifically, assuming $n$ denotes the number of agents, we represent $n$ with a categorical distribution $p(n;z)=\mathrm{Cat}(z_1, z_2, \ldots, z_N)$ parametrized by a continuous vector $z = [z_1, \ldots, z_N]$ assuming the maximum possible of agents\footnote{If $N$ is unbounded, we may consider parametrized discrete random processes such as the Chinese restaurant process. For all practical purposes here, this is unnecessary.} is $N$. $p(n;z)$ denotes the distribution which generates $n$ agents with probability $z_n$. In this case, we can similarly introduce a target distribution $p(z)$ over the space of $z$ and derive the corresponding variational updates with $V(z, \pi)$.
Since we typically wish to learn a policy that can generalizes across different number of agents, such a representation over multiple $n$'s naturally satisfies this practical requirement. 
We also remark that this formulation can be directly extended to the problems with multiple discrete variables (e.g., number of both agents and objects) via a categorical distribution over all possible discrete combinations.



\section{Variational Automatic Curriculum Learning}\label{sec:algo}
In this section, we implement the variational inference paradigm with computation-efficient approximations as our  {\name} algorithm in the context of sparse-reward cooperative MARL problems. 

We will first present \emph{Task Expansion} in Sec.~\ref{sec:algo:expand}, which is the most important algorithmic component assuming a continuous task space. Then, \emph{Entity Progression}, i.e., the component specialized for discrete task parameters, will be introduced in Sec.~\ref{sec:algo:entity}. Finally, the full algorithm and the literature review will follow in Sec.~\ref{sec:algo:main} and Sec.~\ref{sec:discuss}.

\subsection{Task Expansion as Approximated Stein Variational Inference}\label{sec:algo:expand}
\emph{Task Expansion} is a computation-efficient curriculum update method, which assumes a continuous task representation $\phi$ and optimizes the variational lower bound $\gL_2$ w.r.t. $q(\phi)$. Directly optimizing $\gL_2$ can be computational challenging.  In the following content, we will present a few critical  techniques for the efficient approximation computation. 

\subsubsection{An Ever-Expanding Particle Set}
In the standard Stein variational inference framework, we typically maintain a fixed-size particle set $\mathcal{Q}$ and gradually update the distribution of these particles using Stein variational gradient descent. Since the policy is constantly changing throughout training, we need to repeatedly update the entire set of particles to match the corresponding proposal distribution under $\gL_2$, which is computationally expensive. 
Note that in the cooperative MARL setting, assuming a fixed amount of agents and an ever-improving policy, if we initialize $q(\phi)$ to the proper sub-space of easiest tasks, $q(\phi)$ would monotonically expand towards the entire task space. Hence, we can simply maintain an ever-expanding particle set by repeatedly adding novel tasks with sufficiently high values (i.e., success rates) under the current policy without the need of modifying those existing particles in $\gQ$. 

\subsubsection{Value Quantization}
\begin{wrapfigure}{R}{0.41\textwidth}
{
\centering
\vspace{4mm}
\includegraphics[width=0.35\textwidth]{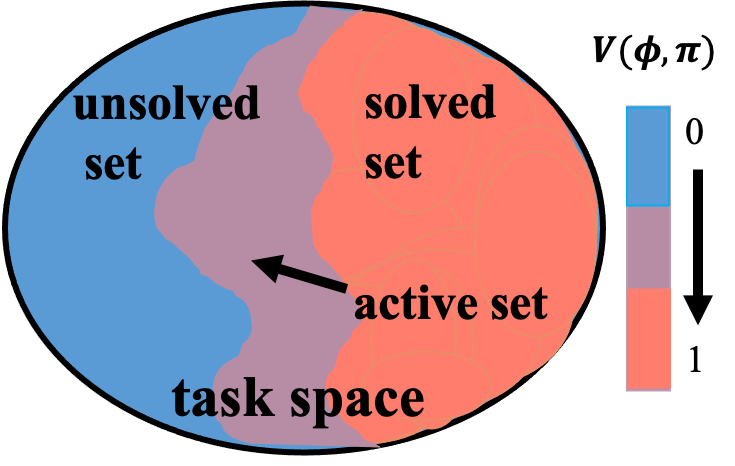}
\caption{Task space partition.}
\vspace{2mm}
\label{fig:task-space}
}
\end{wrapfigure}
Another issue for optimizing $\gL_2$ is that whenever $\pi$ updates, $V(\phi,\pi)$ changes accordingly but evaluating the value function can be particularly expensive in MARL problems. As a practical approximation, we roughly categorize all the tasks into 3 categories w.r.t. their values, i.e., a solved subspace of highest values, $\mathcal{Q}_{\textrm{sol}}=\left\{\phi|V(\phi,\pi)>\sigma_{\max}\right\}$, 
an active subspace of moderate values,  $\mathcal{Q}_{\textrm{act}}=\left\{\phi|{\sigma_{\min}}\le V(\phi,\pi)\le{\sigma_{\max}}\right\}$, and the remaining unsolved subspace. Here $\sigma_{\min}$ and $\sigma_{\max}$ are quantization thresholds. 
Since the value function would monotonically increase as training proceeds, we only need to verify whether an active task becomes solved or whether an unsolved task becomes active, and maintain $\mathcal{Q}_{\textrm{act}}$ and $\mathcal{Q}_{\textrm{sol}}$ accordingly. 
Once a task becomes solved, future value estimation becomes no longer necessary.
In addition, w.r.t. the curriculum learning principle, when optimizing $\pi$ under $\gL_1$, we can also sample more training tasks from $\Qact$ for more effective training. 

\subsubsection{Sampling-Based Particle Exploration}\label{sec:svgd-explore}
\begin{wrapfigure}{R}{0.40\textwidth}
\vspace{4mm}
{
\centering
\includegraphics[width=0.32\textwidth]{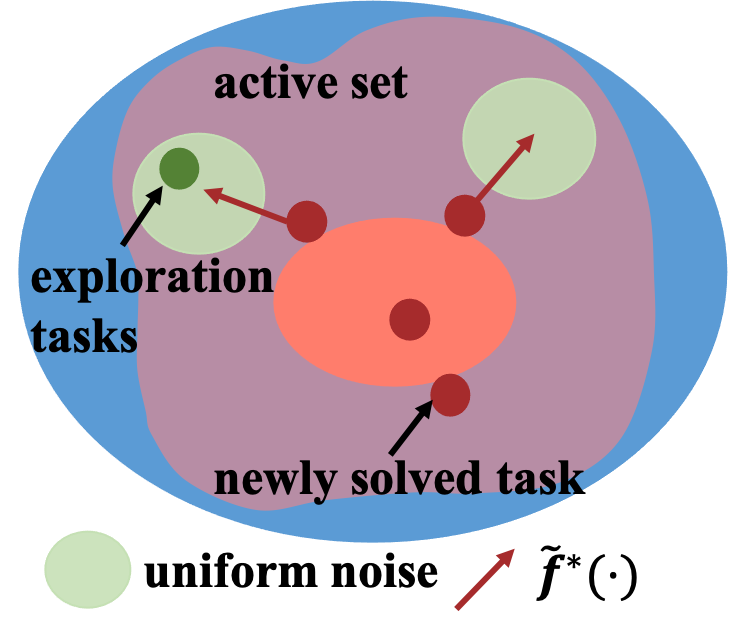}
\caption{We explore novel tasks in the boundary region between $\Qsol$ and $\Qact$}
\vspace{4mm}
\label{fig:boundary-tasks}
}
\end{wrapfigure}
With an ever-expanding particle set $\mathcal{Q}$, we need to explore novel task samples in the task space and add those tasks with sufficiently high values (i.e., $V\ge\sigma_{\min}$) to $\mathcal{Q}$.
The update equation for $q(\phi)$ in Eq.(\ref{eq:svgd}) naturally suggests a diversity-based task expansion scheme, namely exploring sufficiently novel tasks in the vicinity of $\gQ$. 
Due to the difficulty in value estimation, we again follow the value quantization framework and simplify Eq.(\ref{eq:svgd}) as follows:
\begin{align}\label{eq:simple-svgd}
     \tilde{f}^*(\cdot)\propto\bb{E}_{\phi'\in \Qsol}[\nabla_{\phi'} k(\phi', \cdot)].
\end{align}
In the simplified Eq.(\ref{eq:simple-svgd}), we simply ignore tasks with moderate values (i.e., active tasks in $\Qact$) and only evaluate the expectation w.r.t. tasks with the highest values (i.e., solved tasks in $\Qsol$) as a practical approximation. 
Moreover, since $\Qsol$ is ever-expanding, 
Eq.(\ref{eq:simple-svgd}) also suggests that we should explore novel tasks in the boundary region between $\Qsol$ and $\Qact$ (Fig.~\ref{fig:boundary-tasks}). Hence, in practice, whenever an active task from $\Qact$ becomes solved after policy update, we select them as seed tasks to derive novel exploration task samples.  
\\
\\
Eq.(\ref{eq:simple-svgd}) suggests purely gradient-based exploration, which may cause practical issues in curriculum learning since the feasible task parameterization space is often highly constrained. Fig.~\ref{fig:tech3-eval} illustrates a maze scenario where agents need to navigate towards the landmarks and avoid the black obstacles. The task parameterization $\phi$ includes the positions of agents and landmarks. The feasible subspace of $\phi$ is constrained by the obstacles while these feasibility constraints are only accessible as a black box in the curriculum learning setting. Therefore, directly computing a constraint-free gradient can easily lead to an exploration direction towards infeasible regions. 
Here we suggest a simple sampling-based enhancement for such highly-constrained problems, which promotes effective task-space exploration and can be viewed as a zero-th order approximation of the projected gradient.
\\
\\
\textbf{Rejection-Sampling Exploration:} We explore novel tasks $\phi_{\textrm{exp}}$ by adding small uniform noise to the gradient of seed tasks, namely $\phi_{\textrm{exp}}\gets\phi_{\textrm{seed}}+\epsilon \tilde{f}^*(\phi_{\textrm{seed}})+\textrm{Unif}(-\delta,\delta)$. Then we reject those infeasible tasks by querying the environment for configuration feasibility.




\clearpage
\subsection{Entity Progression for Massive Multi-Agent Learning}\label{sec:algo:entity}
Directly applying the Stein variational update (Eq.(\ref{eq:svgd}))
over the relaxation $z$ of the discrete variable $n$ results in a uniform expansion over the entire $N$-dimensional space of $z$, which can be computational challenging when $N$ is particularly large.
We leverage an important inductive bias in cooperative MARL that problems with more agents are generally 
harder than fewer agents, which implies a monotonic update scheme over the $z$ space by starting with $z_{n_0}=1$, where $n_0$ is the smallest possible number of agents, and gradually increasing the probability mass of $z_{k}$ for larger $k$ values.

\emph{Entity Progression} simplifies this procedure further by restricting $z$ to only have two non-zero entries, i.e., $z_{k}+z_{k'}=1$ with $k'>k$, which suggests a smooth and progressive transition from a simpler problem with $k$ agents (i.e., $z_k=1$) to a harder problem with $k'$ agents (i.e., $z_{k'}=1$). Note that we can increase the agent number incrementally (i.e., $k'=k+1$) or at an exponential rate (i.e., $k'=2k$), the latter of which is typically more preferred in practice. We remark that entity progression can be directly applied to problems with multiple discrete parameters by viewing $k$ as a discrete vector.

\subsection{{\name}: A Hierarchical Automatic Curriculum Learning Algorithm}\label{sec:algo:main}
Since the change of discrete variable $n$ may cause a drastic distribution shift of value functions, instead of using a joint continuous representation unifying both $n$ and $\phi$, {\name} combines \emph{Entity Progression} and \emph{Task Expansion} in a hierarchical manner: at the low level, task expansion maintains a separate curriculum $q^n(\phi)$ for each $n$ and performs particle-based curriculum update under a fixed $n$; at the high level, entity progression progressively increases $n$ to $n'$ by smoothly switching training distribution from $q^n(\phi)$ to $q^{n'}(\phi)$.
The full procedure is summarized in Algo.~\ref{algo:full}.

\begin{algorithm}[tb]
 \caption{The {\name} Algorithm}
 \label{algo:full}
   \textbf{Input:} {$\theta$, $n_0$, $N$, $B$, $B_{\textrm{exp}}$}\tcp*{$B:$ training batch size; $B_{\textrm{exp}}:$ exploration size}
   
  \textbf{Output:} final policy \ $\pi_{\theta}$\;
  
  $k\leftarrow 0$, $\Qsol^0\gets\{\}$, $\Qact^0\leftarrow \textbf{GetEasy}(n_0)$\tcp*{use easy tasks to initialize $q(\phi)$}
  
  \Repeat{$n_{k}=N$}
  {
  
  $z\gets 1$, $n_{k+1}\gets \textbf{Inc}(n_{k})$, $\Qsol^{k+1}\gets\{\}$, $\Qact^{k+1}\leftarrow \textbf{GetEasy}(n_{k+1})$\;
  
  \While{not converge with $n_k$ } 
  {\tcp{Policy Update}

    $\mathcal{M}_{\textrm{train}}\gets\textbf{Sample}(B\times z,\mathcal{Q}^k) + \textbf{Sample}(B\times(1-z),\mathcal{Q}^{k+1})$\;
    
    Train and evaluate $\pi_\theta$ on $\mathcal{M}_{\textrm{train}}$ via MARL\;
    
    \tcp{Task Expansion}
    
    expand $\Qsol^{\{k,k+1\}}$ and $\Qact^{\{k,k+1\}}$ using the value estimates on $\mathcal{M}_{\textrm{train}}$\;

    \tcp{generate novel exploration tasks}
    
    $\mathcal{M}_{\textrm{seed}}\gets$ active tasks from $\mathcal{M}_{\textrm{train}}$ that just become solved under $\pi_\theta$ \;
    
    $\mathcal{M}_{\textrm{exp}}\gets$ generate $B_{\textrm{exp}}$ exploration task samples from seeding tasks in $\mathcal{M}_{\textrm{seed}}$\;
    Add exploration tasks $\gM_{\textrm{exp}}$ to $\Qact^{\{k,k+1\}}$\;
    
    \tcp{\textbf{Entity Progression}}
    Decay $z$\; 
  }
  
  $k\gets k+1$\;
  }
 \end{algorithm}

\subsection{Connection to Existing Works}\label{sec:discuss}
We focus on goal-conditioned multi-agent cooperative problems and assume a paramterized task space. This is a common setting in the recent \underline{A}utomatic \underline{C}urriculum \underline{L}earning (ACL) literature~\cite{bengio2009curriculum,portelas2020automatic}. 
Some works also consider a fixed set of tasks~\cite{matiisen2019teacher}, which are typically referred to as multi-task learning~\cite{yang2020multitask}. 
The core idea of ACL is to train the policy using tasks with moderate difficulty, i.e., neither too hard nor too easy, which naturally emerges as a consequence of our variational paradigm. 

A popular class of ACL methods learns a generative model, such as VAE~\cite{Racaniere2020Automated}, GAN~\cite{florensa2018automatic,fang2021adaptive} and Gaussian mixture model~\cite{portelas2019teacher,jabri2019unsupervised,morad2021embodied}, over the task space with density concentrated on tasks with desired difficulties.  These methods can be viewed as a generic solution under our variational objective (Eq.~\ref{eq:vi-obj}) by representing $q(\phi)$ as a generative model and directly optimizing $\gL_2$ whenever the policy is updated. 
However, it can be extremely sample-inefficient to repeatedly learn $q(\phi)$ from scratch without leveraging the fact that $q(\phi)$ is ever-expanding in cooperative MARL problems. 


Another type of ACL methods is particle-based, which maintains a set of solvable task samples and gradually expands the set towards the entire space.
\cite{florensa2017reverse,ivanovic2019barc} assume a fixed goal and generate tasks with starting states increasingly far away from the goal; \cite{akkaya2019solving,mehta2020active} consider a task space over simulator configurations and train from the easiest settings to the hardest for sim-to-real adaptation.  
These methods are conceptually similar to the \emph{Task Expansion} component in {\name} for its particle-based nature and can be viewed as a random search procedure for optimizing $\gL_2$ in a gradient-free manner. 
Our method provides a principled interpretation of curriculum updates, and achieves superior performance in practice. 
Besides, \cite{wang2020enhanced} learn an individual policy paired with each task for discovering diverse behaviors while our work learns a single goal-conditioned policy for the entire task space; \cite{wohlke2020performance} defines the active set as those parts of the task space in which the value function exhibits the largest gradient w.r.t. the task variable and proposes to meta-learn $f^*(\phi)$ to more effectively expand the curriculum distribution, which is complementary to our work. \cite{klink2019self, klink2020self} propose novel optimization methods based on the value function to minimize the KL-Divergence between $q(\phi)$ and $p(\phi)$ while our work provides a gradient-based exploration method to expand $q(\phi)$ to the entire task space.


\cite{long2020evolutionary,wang2019largescale} propose \emph{population curriculum}, which trains entity-invariant policies over training tasks with a growing number of agents. 
\cite{feng2020solving} adopt a similar curriculum learning method for Sokoban with an increasing number of boxes.
These methods can be viewed as a special case of our \emph{Entity Progression} component by applying a hard transfer from $z_k=1$ to $z_{k'}=1$ while our method smoothly switches between different number of entities and also yields better empirically performances.

\section{Experiment}\label{sec:expr}
We consider four tasks over two environments, \emph{Simple-Spread} and \emph{Push-Ball} in the multi-agent particle-world environment (MPE)~\cite{lowe2017multi}, and \emph{Ramp-Use} and \emph{Lock-and-Return} in the hide-and-seek environment  (HnS)~\cite{baker2020emergent}. 
Every experiment is repeated over 3 seeds and performed on a desktop machine with one 64-core CPU and one 2080-Ti GPU, which is used for forward action computation and training updates. More details can be found in Appendix. 

\subsection{The Multi-Agent Particle-World Environment}\label{sec:expr:mpe}

The two tasks are illustrated in Fig.~\ref{fig:particle-world}. In \emph{Simple-Spread}, there are $n$ agents and $n$ landmarks and the agents need to occupy all the landmarks. The agents are penalized for collisions and only receive a positive reward when all the landmarks are covered. 
In \emph{Push-Ball}, there are $n$ agents, $n$ balls and $n$ landmarks. The agents need to physically push the balls to get every landmark covered. A success reward is given after all landmarks are covered.
We scale $n$ at an exponential rate following \cite{long2020evolutionary}.

\subsubsection{Main Result}
We present most important results in this section and more analysis can be found in App.~B. We compare \emph{\name} with 5 baselines, including (1) multi-agent PPO \cite{yu2021surprising} with uniform task sampling (\emph{Uniform}), (2) population curriculum only (\emph{PC-Unif}), (3) reverse curriculum generation (\emph{RCG}) \cite{florensa2017reverse}, (4) automatic goal generation (\emph{GoalGAN})~\cite{florensa2018automatic}, which uses a GAN to generate training tasks, 
(5) adversarially motivated intrinsic goals (\emph{AMIGo})~\cite{campero2021learning}, which learns a teacher to generate increasingly challenging goals.
We test all methods in \emph{Simple-Spread} with $n=8$ and \emph{Push-Ball} with $n=4$. For \emph{PC-Unif} and \emph{\name}, we start with $n_0=4$ in \emph{Simple-Spread} and $n_0=2$ in \emph{Push-Ball} and then switch to the desired agent number. Evaluation is always performed on the final $N$. Results are shown in Fig.~\ref{fig:main-results-particle-world}, where {\name} outperforms all baselines with a clear margin. In \emph{Simple-Spread}, 
all baselines fail to solve the task: Uniform and PC-Unif fail to gain sufficient progress due to sparse rewards; RCG quickly runs out of active tasks; GoalGAN and AMIGo takes an extremely long time to train a good neural goal-generator.
In \emph{Push-Ball}, Uniform and PC-Unif learn much slower, RCG quickly converges to a local minimum without further policy improvements, GoalGAN and AMIGo again consumes a large number of samples to make training progress. 
\begin{figure}[H]
\centering
\begin{minipage}{.32\textwidth}
\centering
{\includegraphics[width=\linewidth]{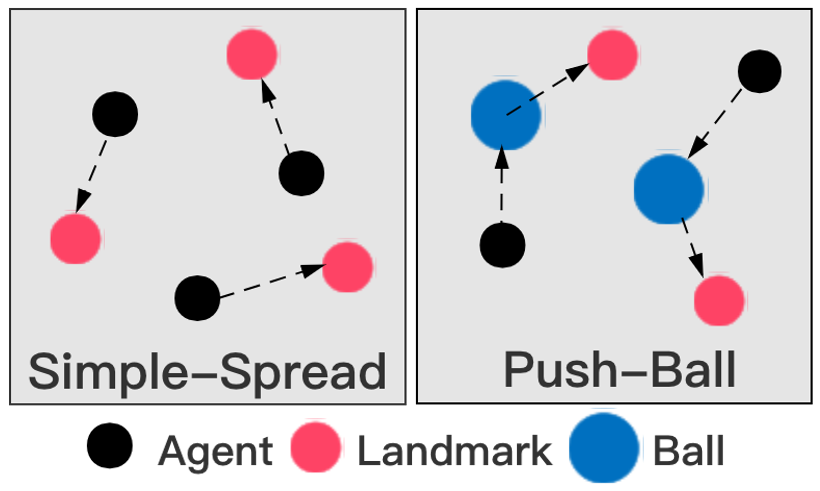}}
\captionof{figure}{MPE tasks.}
\label{fig:particle-world}
\end{minipage}
\begin{minipage}{.63\textwidth}
\centering
{\includegraphics[width=\linewidth]{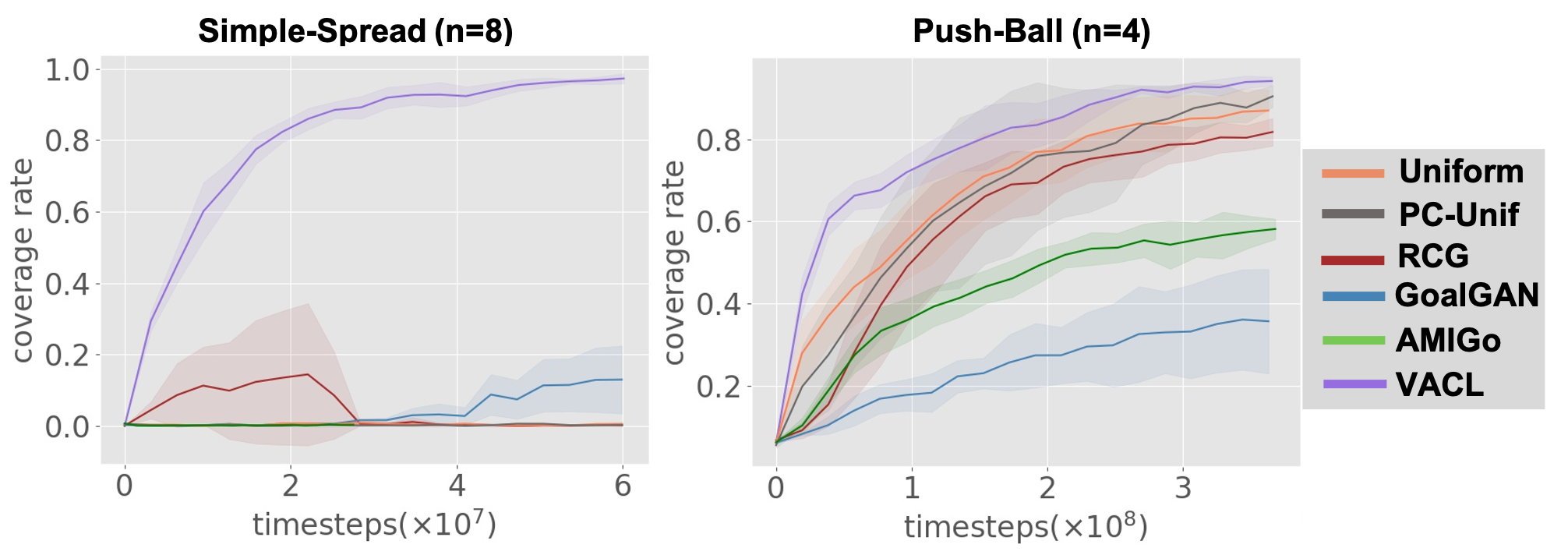}}
\captionof{figure}{Comparison of {\name} and baselines on MPE.}
\label{fig:main-results-particle-world}
\end{minipage}
\end{figure}

\begin{wraptable}{r}{0.4\textwidth}
\caption{The best coverage rate ever reported on \emph{Simple-Spread}.}
\label{tab:main_results_sp_50}
	\centering
	\begin{small}
	\begin{tabular}{c|ccc}
    \toprule
    $n$&EPC&ATOC&\name \\
    \midrule
    24&56.8\%&/&\textbf{97.6\% } \\
    50&/&92\% &\textbf{98.5\%} \\
    100&/&89\%&\textbf{98\%} \\
    \bottomrule
	\end{tabular}
	\end{small}
\end{wraptable}
In addition to the baselines above, we also consider the setting of \textbf{massive agents} on \emph{Simple-Spread} and compare our {\name} method with other coverage numbers in the existing literature. We consider two existing works. The \emph{attentional communication} (ATOC) method~\cite{NIPS2018_7956}, which adopts intra-agent communication channels and trains $n=50$ and $n=100$ agents with \emph{dense} rewards.
ATOC is not open-sourced\footnote{We contacted the authors but did not get a response.}. We re-train {\name} on the same environment configuration (i.e., same agent/landmark size and room size) as ATOC for a fair comparison with the number reported in \cite{NIPS2018_7956}. 
The other work is evolutionary population curriculum (EPC)~\cite{long2020evolutionary}, which reports the coverage rate on the standard \emph{Simple-Spread} task with $n=24$ agents. EPC adopts dense rewards and separate policies for each agent while {\name} trains a shared policy with \emph{sparse 0/1 rewards}. The results are reported in Tab.~\ref{tab:main_results_sp_50}. {\name} achieves a $98\%$ coverage rate with $n=100$ agents, which outperforms the highest number ever reported to the best of our knowledge.


\vspace{-2mm}
\subsubsection{Ablation Study}
\label{section:ablation}
\begin{figure}
\begin{center}
\subfigure[Choice of seed tasks]{\includegraphics[width=0.33\columnwidth]{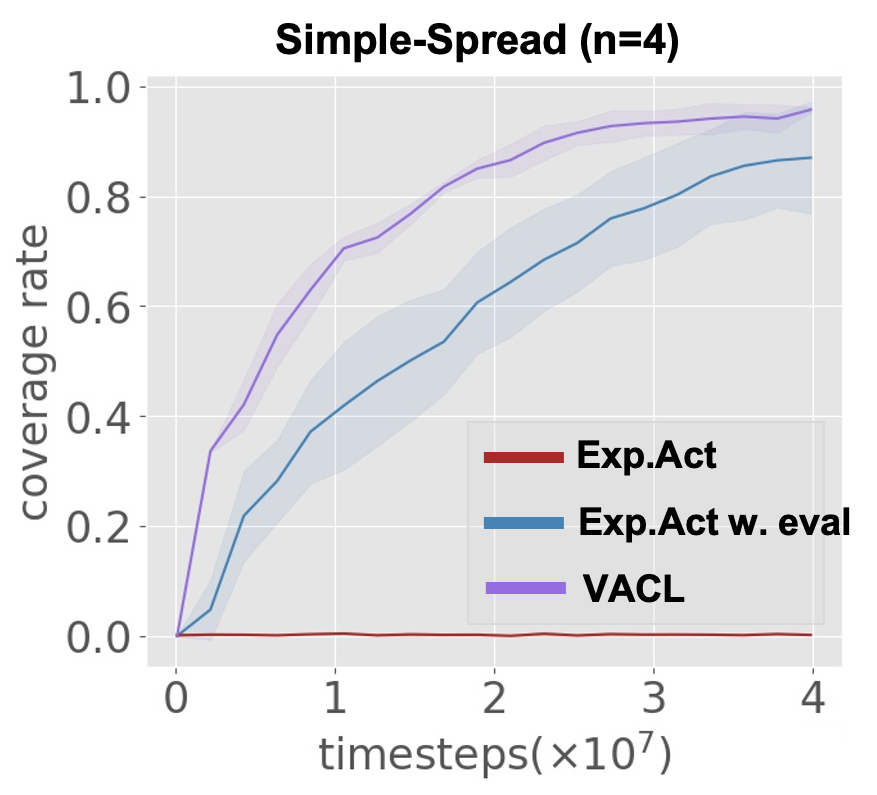}\label{fig:exploration-from-newly-solved-tasks}}
\hspace{1mm}
\subfigure[The \emph{Hard-Spread} scenario]{\includegraphics[width=0.31\columnwidth]{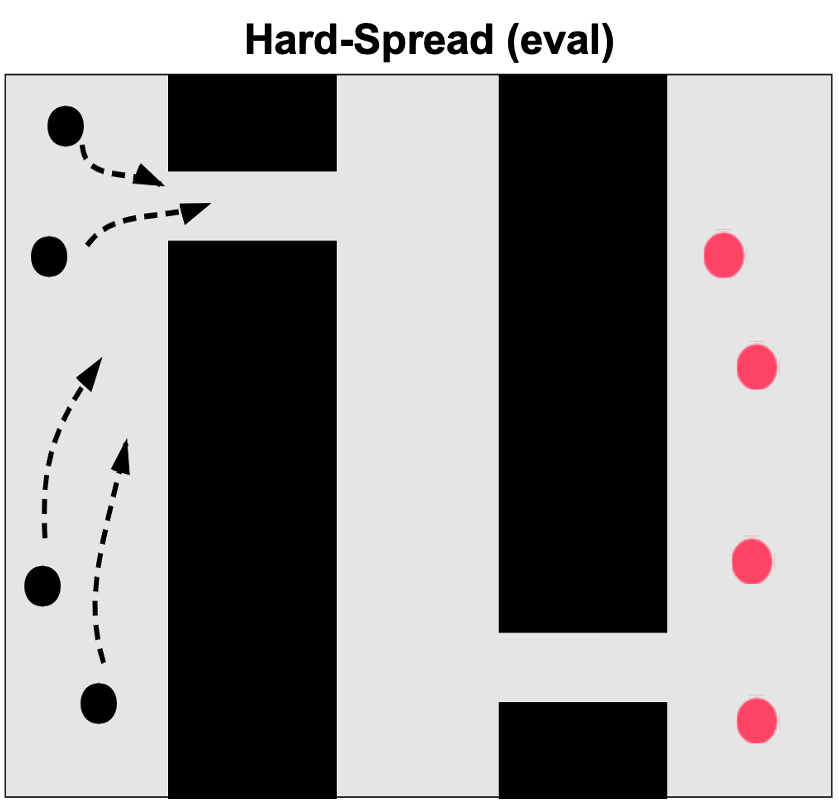}\label{fig:tech3-eval}}
\subfigure[Exploration strategies]{\includegraphics[width=0.33\columnwidth]{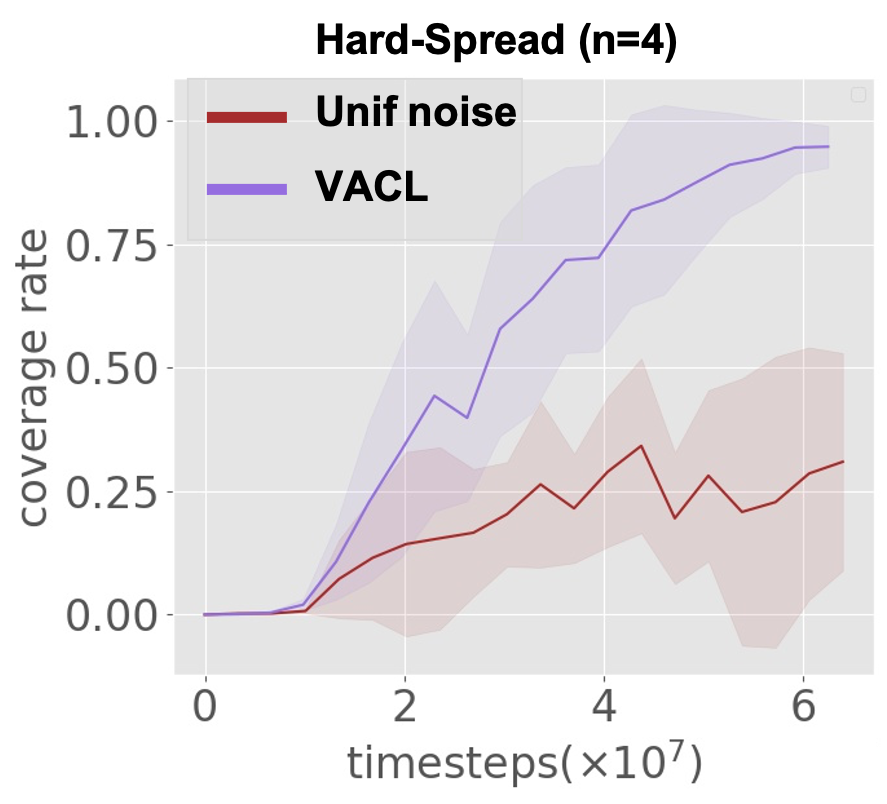}\label{fig:tech3-result}} \\
\subfigure[Sensitivity of particle number]{\includegraphics[width=0.34\columnwidth]{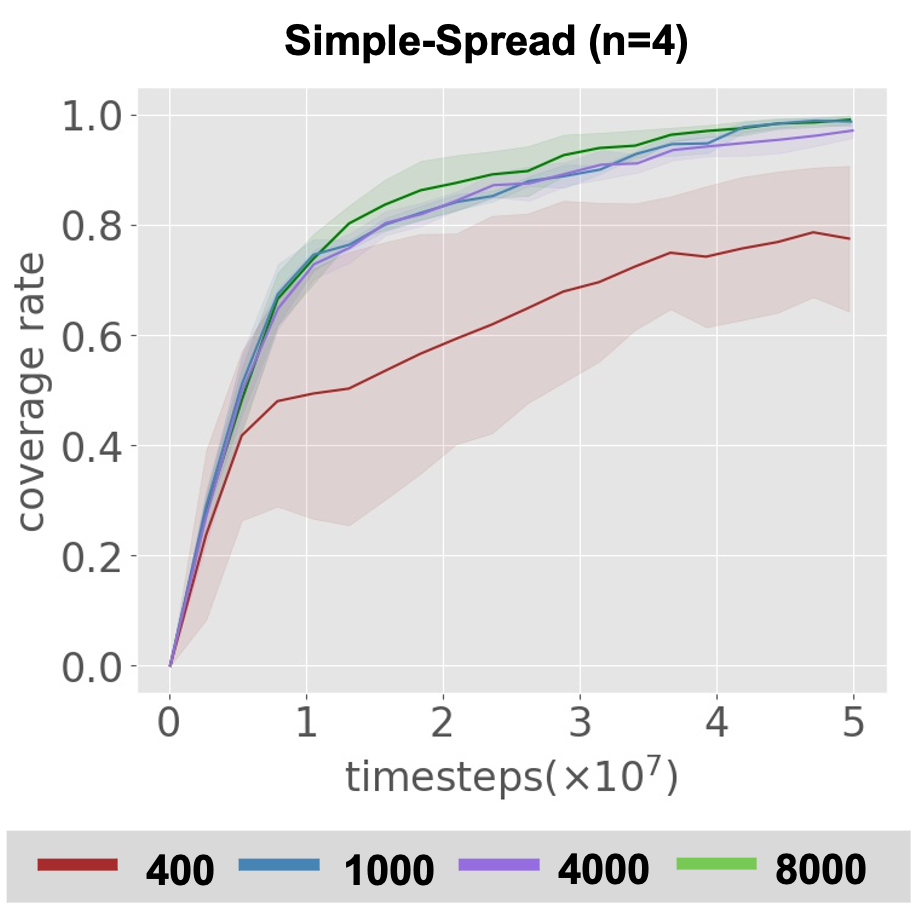}\label{fig:particles}}
\subfigure[Effectiveness of \emph{Entity Progression}]{\includegraphics[width=0.64\columnwidth]{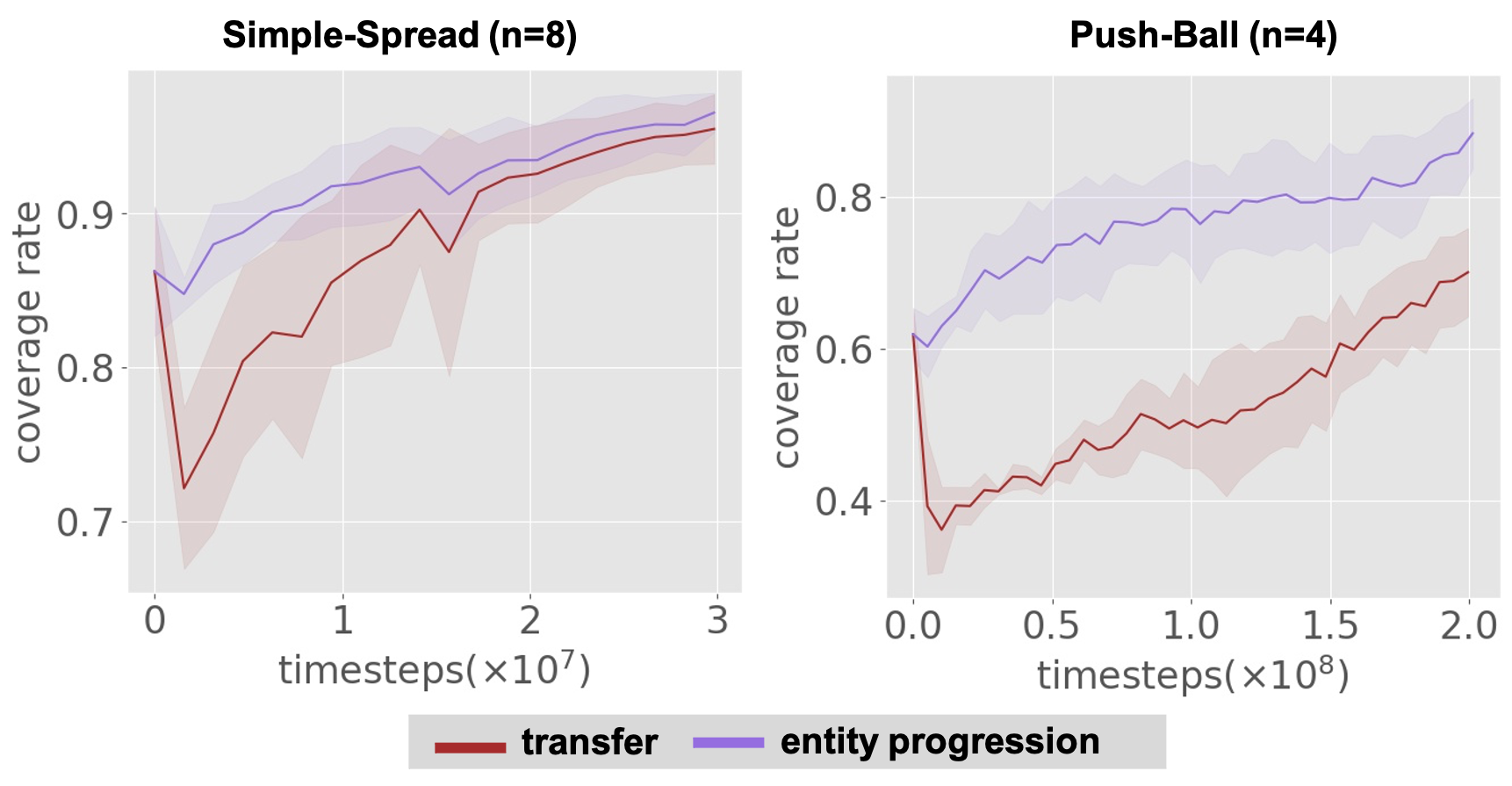}\label{fig:entity-curriculum}}
\caption{Ablation studies. (a) We evaluate different choices of seed tasks in  \emph{Simple-Spread}. (b) We select the hardest tasks to evaluate the performance of different exploration strategies in the \emph{Hard-Spread} scenario, which has a highly constrained task space. (c) We compare our SVGD-principled exploration strategy with random exploration from seed tasks in \emph{Hard-Spread}. (d) We perform experiments in \emph{Simple-Spread} using different numbers of particles to represent $\mathcal{Q}$. (e) We compare entity progression with direct entity transfer in standard \emph{Simple-Spread} and \emph{Push-Ball}.}
\label{fig:ablation}
\vspace{-3mm}
\end{center}
\end{figure}
\textbf{Choice of seed task: }
As implied by Eq.~({\ref{eq:svgd}}) and Eq.~(\ref{eq:simple-svgd}), {\name} explores from the boundary between $\Qact$ and $\Qsol$ by taking newly solved tasks as seeds to generate exploration tasks. A naive alternative is to simply take active tasks from $\Qact$ as seeds (\emph{Exp.Act.}), which is similar to the exploration scheme in the RCG algorithm~\cite{florensa2017reverse}. Note that exploration from active tasks may possibly produce unsolvable tasks, we consider an additional enhanced variant (\emph{Exp.Act w. eval}), which take active tasks as seeds but also perform value estimates (and training) on those exploration tasks to filter out unsolvable tasks before adding them to $\Qact$.
Results of these variants are shown in Fig.~\ref{fig:exploration-from-newly-solved-tasks}. Using active tasks as seeds (\emph{Exp.Act.}) leads to a clear failure since the aggressive exploration brings too many unsolvable tasks to $\Qact$.
Although additional evaluation on exploration tasks (\emph{Exp.Act.w.eval}) significantly stabilizes training, it still yields much worse training progress than the standard {\name}. This suggests that it is critical to explore from the vicinity of solved subspace. 

\textbf{SVGD-Principled Update:}
Our Stein variational paradigm suggests a gradient-based update rule to explore novel tasks in Sec.~\ref{sec:svgd-explore}. However, most existing particle-based algorithms adopts a much simplified gradient-free alternative via uniform random noise, i.e., $\phi_{\textrm{exp}}\gets \phi_{\textrm{seed}}+\textrm{Unif}(-\delta,\delta)$. To illustrate the effectiveness of our SVGD-principled update rule, we consider a challenging \emph{Hard-Spread} scenario in Fig.~\ref{fig:tech3-eval} with 4 agents and 4 landmarks. Obstacles are in black and landmarks can only be placed in the right side. We initialize $\Qact$ with easy tasks where each landmark has an agent nearby and evaluate the performance of the learning policy on those hardest tasks in Fig.~\ref{fig:tech3-eval}. We remark that due to the highly constrained task space, it becomes particularly challenging for the ACL method to gradually guide the agents to learn to go through the narrow doors between obstacles. 
We evaluate the test performances of {\name} and the simplified gradient-free version (\emph{Unif. noise}) in Fig.~\ref{fig:tech3-result}. Our principled rule produces a stable learning curve with fast convergence while the gradient-free version fails to solve the task. 

\textbf{Rejection Sampling:} To investigate the effectiveness of rejection-sampling-based exploration (\emph{R.J.}), we turn off rejection-sampling (i.e., strictly following Eq (~\ref{eq:simple-svgd})) and conduct experiments on \emph{Hard-Spread} (i.e., highly constrained task space) and \emph{Simple-Spread} (i.e., less constrained task space) in Tab. ~\ref{tab:rejection-sampling}. 
\\
\begin{wraptable}{r}{0.44\textwidth}
\caption{Ablation study on the rejection-sampling technique in Task Expansion on \emph{Simple-Spread} and \emph{Hard-Spread} with $n=4$}
\centering
\label{tab:rejection-sampling}
	\begin{small}
	\begin{tabular}{c|ccc}
    \toprule
    coverage rate \%&w. R.J.&w.o. R.J. \\
    \midrule
    \emph{Simple-Spread}&$96.0\pm1.9$&$94.1\pm4.9$ \\
    \emph{Hard-Spread}&$97.2\pm1.4$ &$0$ \\
    \bottomrule
	\end{tabular}
	\end{small}
\end{wraptable}
Rejection-sampling-based exploration clearly improves the final performance, particularly in \emph{Hard-Spread}, where purely gradient-based expansion (i.e., Eq.(\ref{eq:simple-svgd})) can hardly discover feasible novel tasks. It is worth mentioning that these feasibility constraints are shared across all the experiments, so all the baselines, including RCG, GoalGAN, and AMIGo, have access to the feasibility check and are able to produce feasible training tasks.

\textbf{Sensitivity of Particle Number:} We perform experiments with different numbers of particles in \emph{Simple-Spread} with $n=4$ and the results in Fig.~\ref{fig:particles} show that we do need sufficient particles to approximate $q(\phi)$ . Insufficient particle size (e.g., 400) results in inaccurate estimation and poor performance. The results are robust when particle size is at least 1000. We choose 4000 particles to approximate $q(\phi)$ in all our experiments.

\textbf{Entity Progression:}
We compare \emph{entity progression} with naive population curriculum, i.e., directly transfer the trained policy on $n_{k-1}$ to the next entity size $n_k$ (\emph{transfer}). We train with the two methods on \emph{Simple-Spread} from $n_0=4$ to $N=8$ and on \emph{Push-Ball} from $n_0=2$ to $N=4$. The results are shown in Fig.~\ref{fig:entity-curriculum}. 
In both two tasks, direct transfer yields a significant performance drop, which we believe due to the drastic training distribution shift. 
For \emph{entity progression}, due to its smooth transition between the continuous relaxation of $z_{k}=1$ and $z_k'=1$, it produces a much stable training curves. \emph{Entity progression} also yields the much better final performances particularly on the more challenging \emph{Push-Ball} scenario.
\clearpage

\subsection{The Hide-and-Seek Environment}\label{sec:expr:hns}
\begin{wrapfigure}{r}{0.4\textwidth}
{
\centering
\includegraphics[width=0.4\textwidth]{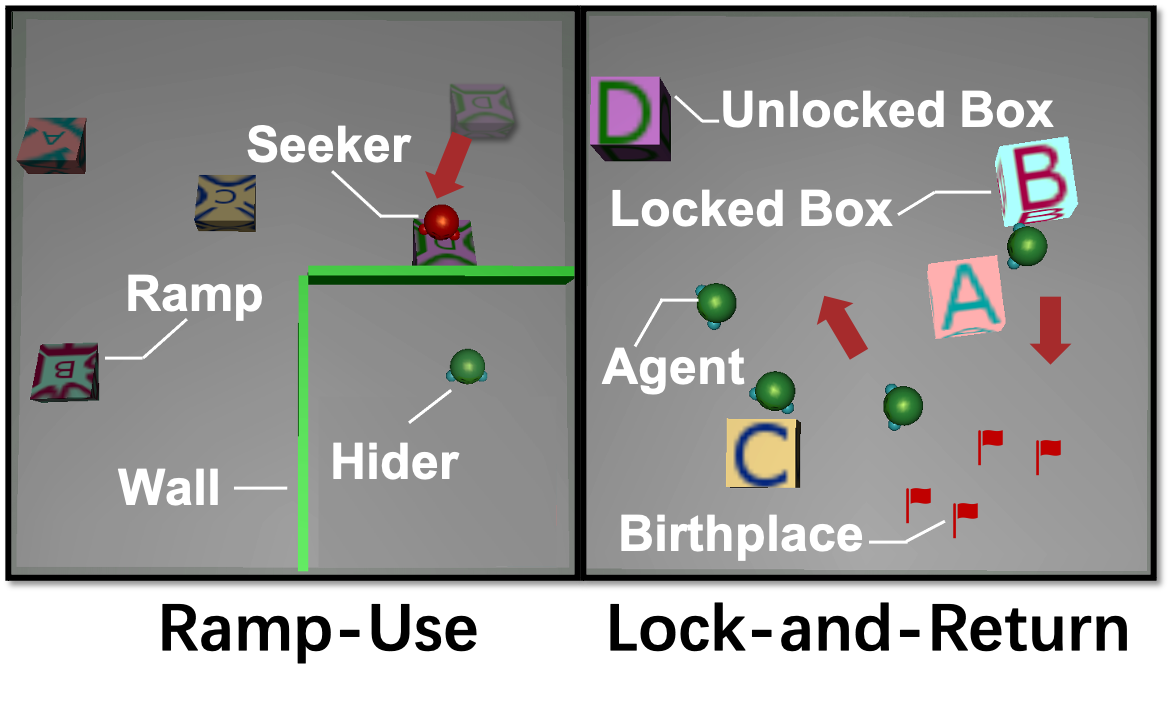}
\caption{Tasks in the hide-and-seek environment (HnS).}
\label{fig:hide-and-seek-demo}
}
\end{wrapfigure}

We consider two tasks in the hide-and-seek environment (HnS) as shown in  Fig.~\ref{fig:hide-and-seek-demo}, i.e., a simplified hide-and-seek game called \emph{Ramp-Use} and a multi-agent and sparse-reward variant of the \emph{Lock-and-Return} task from the transfer task suite in \cite{baker2020emergent}. 
\emph{Ramp-Use} is a single-agent scenario for the ramp use strategy, with 1 ramps, 1 movable seeker and 1 fixed hider in the quadrant room. We need to train a seeker policy to use the ramp to get into the quadrant room for positive rewards.
In \emph{Lock-and-Return}, the discrete parameter $n$ becomes a 2-dimensional vector and each dimension represents the number of agents and boxes respective. 
Agents need to lock all the boxes and return to their birthplaces. Agents get a reward when \emph{all} boxes are locked and another success reward when the task is finished, i.e., all boxes locked and all agents back to the birthplaces. 

The results of different methods are shown in Tab.~\ref{tab:hns_main_results}. We train each algorithm on \emph{Ramp-Use} for 25M timesteps and on \emph{Lock-and-Return} with $n=(2,2)$ for 180M timesteps. For \emph{Lock-and-Return} with $n=(4,4)$, we train {\name} for a total number of 230M timesteps. We report the average success rate over 3 seeds. {\name} is the only algorithm able to solve both two tasks.
\begin{table}[H]
\caption{Results of {\name} and baselines in HnS tasks.}
\label{tab:hns_main_results}
\begin{center}
\begin{small}
\resizebox{1.0\columnwidth}{!}{
\begin{tabular}{l|c|ccccc}
\toprule
& &Uniform&RCG&GoalGAN&AMIGo&\name \\
\midrule
Ramp-Use & $n=1$ &$15.3\%\pm16.1\%$ &$76.7\%\pm2.1\%$&$6.4\%\pm7.5\%$&$40.8\%\pm29.2\%$&$99.7\%\pm0.5\%$ \\
\midrule
\multirow{2}{*}{Lock-and-Return}&$n=(2,2)$&$<$1\%&$5.0\%\pm5.1\%$&$<$1\%&$<2$\%&$97.3\%\pm0.1\%$ \\
&$n=(4,4)$&/&/&/&/&$97.0\%\pm1.6\%$ \\
\bottomrule
\end{tabular}
}
\end{small}
\end{center}
\end{table}

\begin{wrapfigure}{R}{0.3\textwidth}
{
\centering
\includegraphics[width=0.3\textwidth]{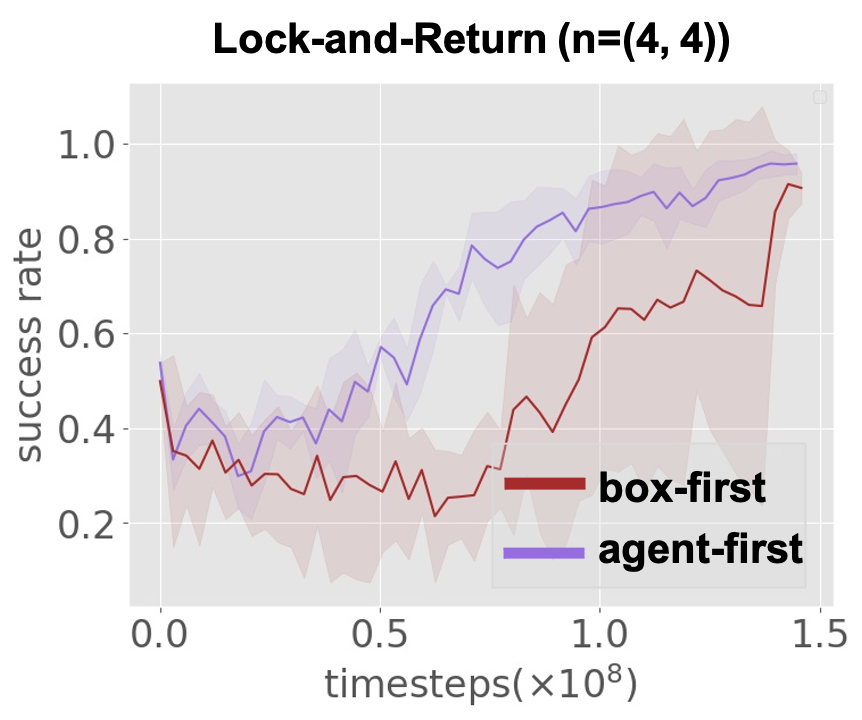}
\caption{Results of different entity switching strategies.}
\label{fig:bl-ablation}
}
\end{wrapfigure}
In \emph{Lock-and-Return}, we start with the easy setting of $n_0=(2,2)$ and proceed towards $N=(4,4)$. Note that $n$ is a 2-dimensional discrete vector in this task, there are multiple paths to transit from the continuous relaxation $z_{(2,2)}=1$ to $z_{(4,4)}=1$.
One is to double the number of agents first and keep the box number unchanged, and then double the number of boxes, i.e., $z_{n_0}=1\to z_{(4,2)}=1\to z_{N}=1$ (\emph{agent-first}). The other is to first double the number of boxes and then increase the agent number, i.e., $z_{n_0}=1\to z_{(2,4)}=1\to z_{N}=1$ (\emph{box-first}). Although both two paths are feasible transitions, they lead to different practical performances. 
The results are shown in Fig.~\ref{fig:bl-ablation}, where we clearly observe that the strategy of increasing agent number first substantially outperforms the box-first strategy. Therefore, we generally suggest to increase the number of agents before objects in entity progression.  We remark that since no baseline achieves a success rate above 15\% even in the easy case of $n=(2,2)$, we do not evaluate them in the hard setting.

\section{Conclusion}\label{sec:conclude}
In this paper, we propose a novel curriculum learning paradigm following the principle of Stein variational inference and develop an effective algorithm, Varitional Automatic Curriculum Learning (\name), which solves a collection of sparse-reward multi-agent cooperative problems.
In this work, we consider cooperative tasks for the clear measurement of training progress. We also remark that {\name} assumes a monotonic expansion of $q(\phi)$, which only holds in the fully cooperative setting.
There are also existing works that study ACL for competitive games~\citep{sukhbaatar2018intrinsic,leibo2019autocurricula,baker2020emergent}, which we leave as future work. 
Our framework tackles an important research problem in the literature while experiments are conducted on open-source environments with student licences, so we believe our work does not produce any negative impact to the society.


\section*{Acknowledgements}\label{sec:ack}
This work is supported by National Science Foundation of China (No.~U20A20334, U19B2019, 61832007), National Key R\&D Program of China (No.2019YFF0301505), Tsinghua EE Xilinx AI Research Fund and 2030 Innovation Megaprojects of China (Programme on New Generation Artificial Intelligence) Grant No.~2021AAA0150000. We appreciate Jiantao Qiu and Xuefei Ning for their reviews of an early draft. We would like to thank Chao Yu for her support and input during this project. Yi Wu would also thank Bowen Baker and Ingmar Kanitscheider since some of the very initial ideas are inspired by their early discussion at OpenAI. Finally, we thank all anonymous reviewers for their constructive comments.

\bibliography{reference}
\bibliographystyle{plain}
\bibliography{paper}
\newpage

\appendix
All the source code can be found at our project website \url{https://sites.google.com/view/vacl-neurips-2021}. 

\section{Proofs\label{thm:proof}}

In order to prove Theorem~\ref{thm:main}, we introduce the following lemma, which uses Assumption~\ref{assumption:1}.

\begin{lemma}
Let $T(\phi) = \phi + \epsilon f(\phi)$ and $q_{[T]}(\psi)$ the density of $\psi = T(\phi)$. Then
\begin{align}
    \nabla_\epsilon \gL_2(\pi, q_{[T]}) \vert_{\epsilon = 0} = - \bb{E}_{\phi \sim q(\phi)}[V(\psi, \pi) \cdot \mathrm{trace}(\nabla_\phi \log p(\phi) f(\phi)^\top + \nabla_\phi f(\phi))]
\end{align}
with $\gL_2(\phi, \pi)$ as defined in \eqref{eq:vi-obj}.
\end{lemma}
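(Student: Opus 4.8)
The plan is to collapse everything to a single expectation over the fixed distribution $q$ via the change of variables $\psi = T(\phi)$, and then differentiate under the integral sign. First I would write $\gL_2(\pi, q_{[T]}) = \mathbb{E}_{\psi\sim q_{[T]}}[V(\psi,\pi)\log(p(\psi)/q_{[T]}(\psi))]$ and push the expectation back onto $\phi\sim q$ using that $\mathbb{E}_{\psi\sim q_{[T]}}[g(\psi)] = \mathbb{E}_{\phi\sim q}[g(T(\phi))]$ for any integrable $g$. The one non-elementary input is the pushforward density. For $\epsilon$ small enough that $T = \mathrm{id} + \epsilon f$ is a diffeomorphism (guaranteed since $f$ lies in an RKHS with a smooth kernel, so $\nabla_\phi f$ is bounded and $\nabla_\phi T = I + \epsilon\nabla_\phi f$ is invertible for small $\epsilon$), the change-of-variables formula gives $q_{[T]}(T(\phi)) = q(\phi)\,|\det(I + \epsilon\nabla_\phi f(\phi))|^{-1}$. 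Substituting this yields
\begin{align}
\gL_2(\pi, q_{[T]}) = \mathbb{E}_{\phi\sim q}\!\left[V(T(\phi),\pi)\big(\log p(T(\phi)) - \log q(\phi) + \log|\det(I+\epsilon\nabla_\phi f(\phi))|\big)\right].
\end{align}

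Next I would differentiate the integrand in $\epsilon$ and evaluate at $\epsilon=0$, handling three pieces. The factor $V(T(\phi),\pi)$ is where Assumption~\ref{assumption:1} enters: since $V(\cdot,\pi)$ is locally constant on a ball around each $\phi\in\supp(q)$, its directional derivative vanishes, so $\nabla_\epsilon V(T(\phi),\pi)|_{\epsilon=0}=0$ and $V$ may be pulled out as an $\epsilon$-independent weight evaluated at $\phi$ (equivalently at $\psi$, since the two agree locally). The term $\log p(T(\phi))$ differentiates by the chain rule to $(\nabla_\phi\log p(\phi))^\top f(\phi)$, which I would rewrite as $\mathrm{trace}(\nabla_\phi\log p(\phi)\,f(\phi)^\top)$. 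The term $-\log q(\phi)$ has no $\epsilon$ dependence and drops out. For the log-determinant term, near $\epsilon=0$ the determinant is positive, so by Jacobi's formula $\frac{d}{d\epsilon}\log\det(I+\epsilon\nabla_\phi f(\phi))\big|_{\epsilon=0} = \mathrm{trace}((I)^{-1}\nabla_\phi f(\phi)) = \mathrm{trace}(\nabla_\phi f(\phi))$. Collecting the surviving terms produces the $V$-reweighted Stein operator $\mathrm{trace}(\nabla_\phi\log p\,f^\top + \nabla_\phi f)$, which is the claimed integrand.

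I would justify the interchange of differentiation and expectation by dominated convergence, using $V\in[0,1]$ together with the smoothness and boundedness of $f$ and $\nabla_\phi f$ over the compact set of particles. The main obstacle is not the matrix calculus but the careful use of Assumption~\ref{assumption:1}: it is precisely what lets the $V$ factor behave as an $\epsilon$-independent weight, so the derivative collapses to the Stein-operator form rather than producing an extra $\nabla_\phi V$ contribution. The second point to be careful about is the overall sign, which is governed entirely by the $|\det|^{-1}$ factor in the pushforward density; I would pin it down by specializing to $V\equiv 1$, where $\gL_2 = -\KL(q_{[T]}\Vert p)$ and the result must reduce to the standard SVGD derivative of~\cite{liu2016stein}. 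This special case is the cleanest consistency check for the final expression.
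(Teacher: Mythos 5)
Your proof is correct and follows essentially the same route as the paper's: both reduce the objective to an expectation over the fixed $q$ via change of variables (you through the pushforward identity $q_{[T]}(T(\phi)) = q(\phi)\,|\det(I+\epsilon\nabla_\phi f(\phi))|^{-1}$, the paper through the equivalent backward-transformed target $p_{[T^{-1}]}(\phi) = p(T(\phi))\,|\det \nabla_\phi T(\phi)|$, which produce the identical three-term integrand), kill the $\nabla_\epsilon V$ term by Assumption~\ref{assumption:1}, and evaluate the remaining pieces by the chain rule and Jacobi's formula; your added remarks on invertibility of $T$ for small $\epsilon$ and dominated convergence only tighten steps the paper leaves implicit. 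One point worth flagging: your derivation yields $+\,\mathbb{E}_{\phi\sim q}\left[V\cdot\mathrm{trace}\left(\nabla_\phi\log p(\phi) f(\phi)^\top + \nabla_\phi f(\phi)\right)\right]$, which agrees with the final line of the paper's own proof (and with Theorem~\ref{thm:main}) but not with the minus sign printed in the lemma statement; your $V\equiv 1$ consistency check settles this correctly, since then $\gL_2 = -\KL(q_{[T]}\Vert p)$ and the standard SVGD result gives $\nabla_\epsilon \gL_2\vert_{\epsilon=0} = +\,\mathbb{E}_q[\mathrm{trace}(\nabla_\phi\log p\, f^\top + \nabla_\phi f)]$, so the minus sign in the lemma as stated is a typo rather than a defect in your argument.
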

\begin{proof} The proof is largely based on \cite{liu2016stein}. Denote by $p_{T^{-1}}(\phi)$ the density of $\phi = T^{-1}(\psi)$ when $\psi \sim p(\psi)$, then:
\begin{align}
    \gL_2(\pi, q_{[T]}) = \mathbb{E}_{\psi\sim q_{[T]}(\psi)}\left[V(\psi,\pi) \log \frac{p(\psi)}{q_{[T]}(\psi)} \right] =  \mathbb{E}_{\phi\sim q(\phi)}\left[V(T(\phi),\pi) \log \frac{p_{[T^{-1}]}(\phi)}{q(\phi)} \right]
\end{align}
and since $\epsilon$ only depends on $T$, we have:
\begin{align}
    \nabla_\epsilon \gL_2(\pi, q_{[T]}) = \bb{E}_{\phi \sim q(\phi)}\left[V(T(\phi), \pi) \nabla_\epsilon \log p_{[T^{-1}]}(\phi) + \log \frac{p_{[T^{-1}]}(\phi)}{q(\phi)} \nabla_\epsilon V(T(\phi), \pi)\right]
\end{align}
From Assumption~\ref{assumption:1}, $V(\phi', \pi)$ is constant within a small vicinity of $\phi$; thus $\nabla_\epsilon V(T(\phi), \pi) |_{\epsilon = 0} = 0$; hence
\begin{align}
    \nabla_\epsilon \gL_2(\pi, q_{[T]}) |_{\epsilon = 0} = \bb{E}_{\phi \sim q(\phi)}\left[V(T(\phi), \pi) \cdot \nabla_\epsilon \log p_{[T^{-1}]}(\phi) |_{\epsilon = 0}\right] 
\end{align}
Define $s_p(\phi) = \nabla_\phi \log p(\phi)$; we have
\begin{align}
    \nabla_\epsilon \log p_{[T^{-1}]}(\phi) = s_p(T(\phi))^\top \nabla_\epsilon T(\phi) + \mathrm{trace}((\nabla_\phi T(\phi))^{-1} \cdot \nabla_{\epsilon} \nabla_{\phi} T(\phi)).
\end{align}
When $T(\phi) = \phi + \epsilon f(\phi)$ and $\epsilon = 0$, we have
\begin{align}
    T(\phi) = \phi, \quad \nabla_\epsilon T(\phi) = f(\phi), \quad \nabla_\phi T(\phi) = I, \quad \nabla_\epsilon \nabla_\phi T(\phi) = \nabla_\phi f(\phi),
\end{align}
where $I$ is the identity matrix. Therefore:
\begin{align}
    \nabla_\epsilon \gL_2(\pi, q_{[T]}) |_{\epsilon=0} & = \bb{E}_{\phi \sim q(\phi)}\left[V(T(\phi), \pi) (\nabla_\phi \log p(\phi)^\top f(\phi) + \mathrm{trace}( \nabla_{\phi} f(\phi)))\right] \\
    & = \bb{E}_{\phi \sim q(\phi)}\left[V(T(\phi), \pi) \cdot \mathrm{trace}(\nabla_\phi \log p(\phi) f(\phi)^\top + \nabla_{\phi} f(\phi))\right]
\end{align}
where the final equality represents an inner product with a trace.
\end{proof}

\thmmain*
\begin{proof}
Let $\gH^d = \gH \times \cdots \times \gH$ be a vector-valued RKHS, and $F[f]$ be a functional of $f$. The gradient $\nabla_f F[f]$ of $F[\cdot]$ is a function in $\gH^d$ that satisfies 
\begin{align}
    F[f + \epsilon g] = F[f] + \epsilon \langle \nabla_f F[f],  g \rangle_{\gH^d} + O(\epsilon^2).
\end{align}
Define $F[f] = \mathbb{E}_{\phi\sim q(\phi)}\left[V(\phi + f(\phi),\pi) \left(\log p_{[(\phi + f(\phi))^{-1}]}(\phi) - \log q(\phi) \right) \right]$, we have
\begin{align}
    F[f + \epsilon g] & = \mathbb{E}_{\phi\sim q(\phi)}\left[V(\phi + f(\phi) + \epsilon g(\phi),\pi)  \cdot \left(\log p_{[(\phi + f(\phi) + \epsilon g(\phi))^{-1}]}(\phi) - \log q(\phi) \right) \right] \\
    & = \mathbb{E}_{\phi\sim q(\phi)}\big[V(\phi + f(\phi),\pi)  \cdot \big(\log p(\phi + f(\phi) + \epsilon g(\phi)) - \log q(\phi)  \\
    & \qquad \qquad + \log \det (I + \nabla_\phi f(\phi) + \epsilon \nabla_\phi g(\phi)) \big) \big]\nonumber 
\end{align}
where $V(\phi + f(\phi),\pi) = V(\phi + f(\phi) + \epsilon g(\phi),\pi)$ for $f \approx 0, \epsilon \approx 0$ from Assumption 1. 

Then, we have that:
\begin{align}
    & \ \bb{E}_q[V(\phi + f(\phi),\pi)  \cdot \big(\log p(\phi + f(\phi) + \epsilon g(\phi)) - \log p(\phi + f(\phi)) \big)] \\
    = & \ \epsilon \cdot \bb{E}_q[V(\phi + f(\phi),\pi)  \cdot \nabla_\phi\log p(\phi + f(\phi)) \cdot g(\phi) ] + O(\epsilon^2) \\
    = & \ \epsilon \cdot \bb{E}_q[V(\phi + f(\phi),\pi)  \cdot \nabla_\phi\log p(\phi + f(\phi)) \cdot \langle k(\phi, \cdot), g \rangle_{\gH^d} ] + O(\epsilon^2) \\
    = & \ \epsilon \cdot \langle \bb{E}_q[V(\phi + f(\phi),\pi) \cdot \nabla_\phi\log p(\phi + f(\phi)) \cdot  k(\phi, \cdot), g \rangle_{\gH^d} ] + O(\epsilon^2), \label{eq:1}
\end{align}
where the first equality uses the definition of functional gradient and second equality uses the representation theorem for RKHS. Similarly, we also have that:
\begin{align}
    & \ \bb{E}_q[V(\phi + f(\phi),\pi)  \cdot \big(\log \det (I + \nabla_\phi f(\phi) + \epsilon \nabla_\phi g(\phi)) - \log \det (I + \nabla_\phi f(\phi)) \big)] \\
    = & \ \epsilon \langle \bb{E}_q[V(\phi + f(\phi),\pi) \cdot \mathrm{trace}((I + \nabla_\phi f(\phi))^{-1} \cdot \nabla_\phi k(\phi, \cdot))], g \rangle + O(\epsilon^2). \label{eq:2}
\end{align}
Therefore, by combining \Eqref{eq:1} and \Eqref{eq:2}, we have that
\begin{align}
    F[f + \epsilon g] = F[f] + \epsilon \langle \nabla_f F[f],  g \rangle_{\gH^d} + O(\epsilon^2),
\end{align}
where
\begin{align}
    \nabla_f F[f] = \bb{E}_{\phi \in \mathcal{Q}}[V(\phi,\pi)  \cdot (k(\phi, \cdot) \nabla_{\phi} \log p(\phi + f(\phi)) + \mathrm{trace}((I + \nabla_\phi f(\phi))^{-1} \cdot \nabla_{\phi} k(\phi, \cdot))], \nonumber
\end{align}
and taking $f = 0$ completes the proof.
\end{proof}

\section{Additional Results\label{add:results}}
\textbf{Pure Task Expansion Results on MPE:} {\name} contains entity progression in the  result of Fig.~\ref{fig:main-results-particle-world}. To specifically study the performance of task expansion, we exclude entity progression module from {\name} and compare with baselines in \emph{Simple-Spread} with $n=4$ and \emph{Push-Ball} with $n=2$. For a fair comparison, we also provide additional experiments to combine GoalGAN and AMIGo with the initial knowledge of easy tasks. As show in Fig.~\ref{fig:main_results_particle_world_woEP}, {\name} without entity progression also outperforms all the baselines in the two environments.
\begin{figure}[H]
\centering
{\includegraphics[width=0.9\linewidth]{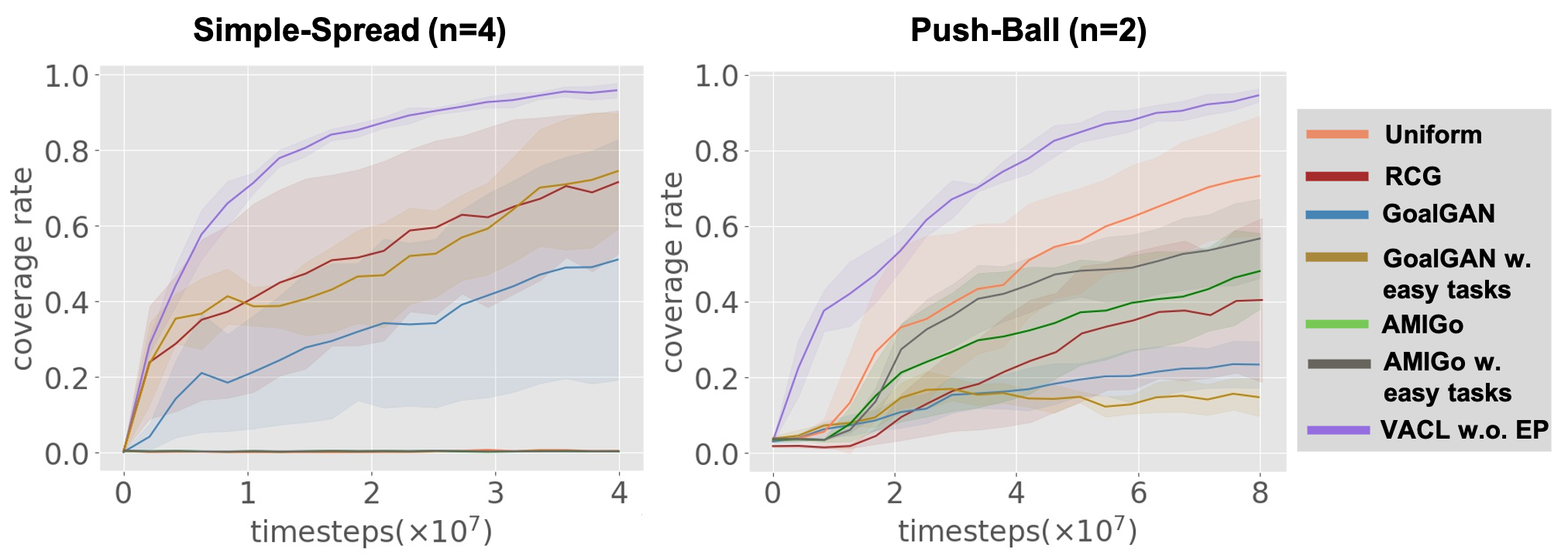}}
\captionof{figure}{Comparison of baselines and {\name} without entity progression on MPE (i.e., task expansion only).}
\label{fig:main_results_particle_world_woEP}
\end{figure}

\textbf{Additional Results on SVGD-Principled Update:} We additionally conduct experiments to compare {\name} with the gradient-free version (\emph{Unif. noise}) in the original \emph{Simple-Spread} with $n=4$ and \emph{Push-Ball} with $n=2$. As shown in Fig.~\ref{fig:SVGD-sp-pb}, {\name} is comparable with the variant using gradient-free exploration in \emph{Simple-Spread} and the gap becomes significant in \emph{Push-Ball}, which implies a gradient-based update rule can explore more novel tasks than the simplified gradient-free method in harder scenarios, which is consistent with what we find in the main paper.
\begin{figure}[H]
\begin{center}
\centerline{\includegraphics[width=0.85\columnwidth]{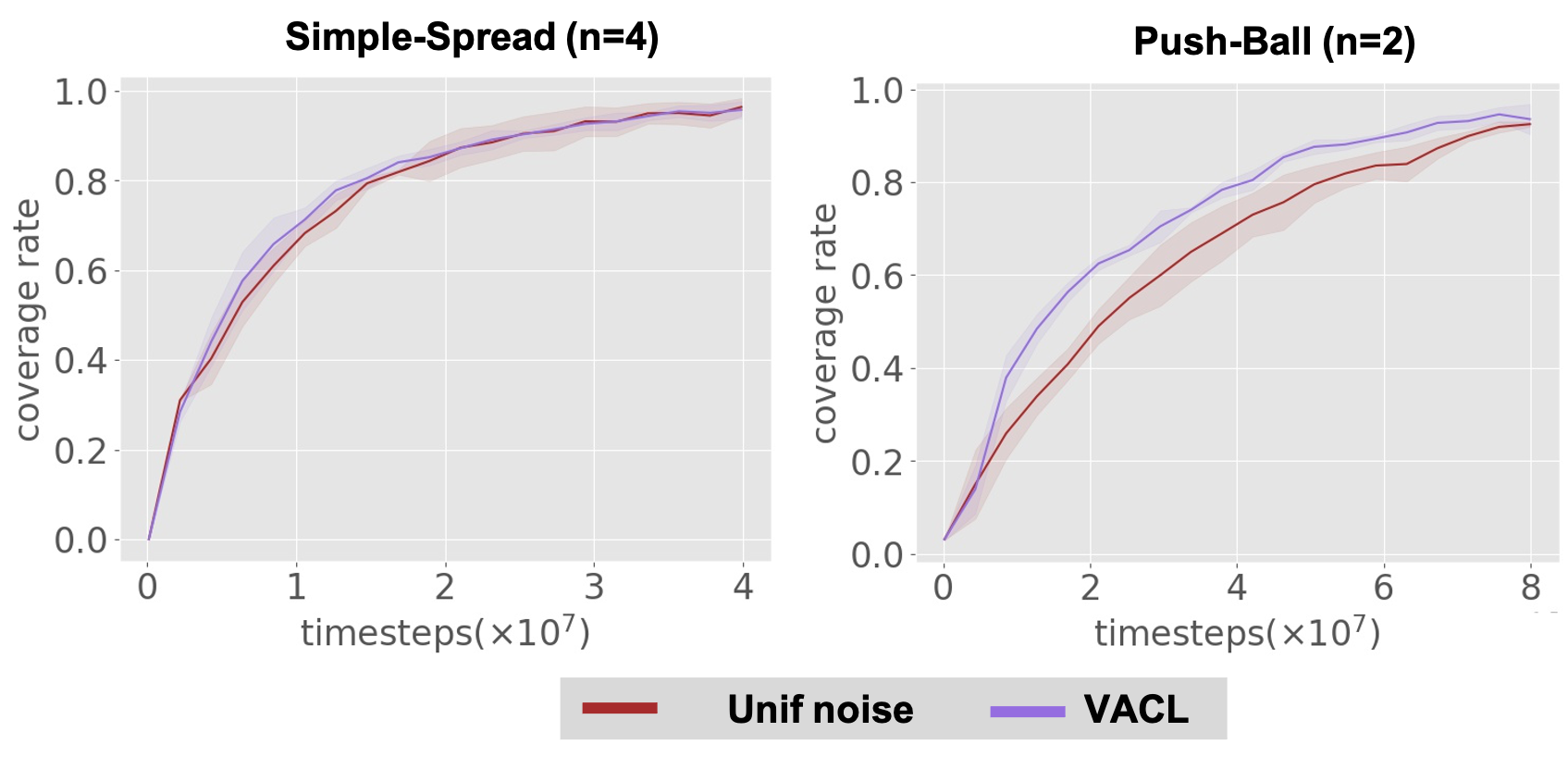}}
\caption{Comparison of {\name} and the gradient-free exploration method in \emph{Simple-Spread} and \emph{Push-Ball}}
\label{fig:SVGD-sp-pb}
\end{center}
\end{figure}

\textbf{{\name} for Heterogeneous Agents: } We add another experiment domain with heterogeneous agents in \emph{Speaker-Listener} (Fig.~\ref{fig:speaker-listener}), which is one of the basic tasks in the MADDPG~\cite{lowe2017multi} paper. This task consists of two cooperative agents, a speaker and a listener, and three landmarks with different colors. The speaker and listener obtain +1 reward when the listener covers the correct landmark. However, while the listener can observe the relative position and color of the landmarks, it does not know which landmark it should navigate to. Conversely, the speaker’s observation consists of the correct landmark color, producing a communication output that the listener can observe. Thus, the speaker must learn to output the landmark color based on the motions of the listener. The results in Fig.~\ref{fig:heterogeneous} show that our algorithm can also work on the problems with heterogeneous agents.
\begin{figure}[H]
\begin{center}
\subfigure[The \emph{Speaker-Listener} task.]{\includegraphics[width=0.42\columnwidth]{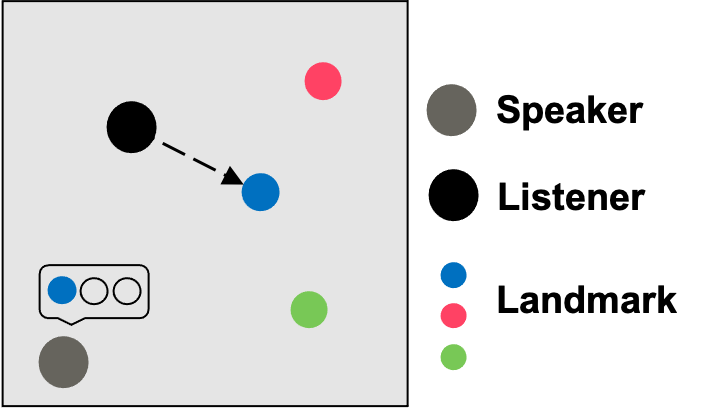}\label{fig:speaker-listener}}
\subfigure[Comparison of {\name} and other baselines in \emph{Speaker-Listener}.]{\includegraphics[width=0.46\columnwidth]{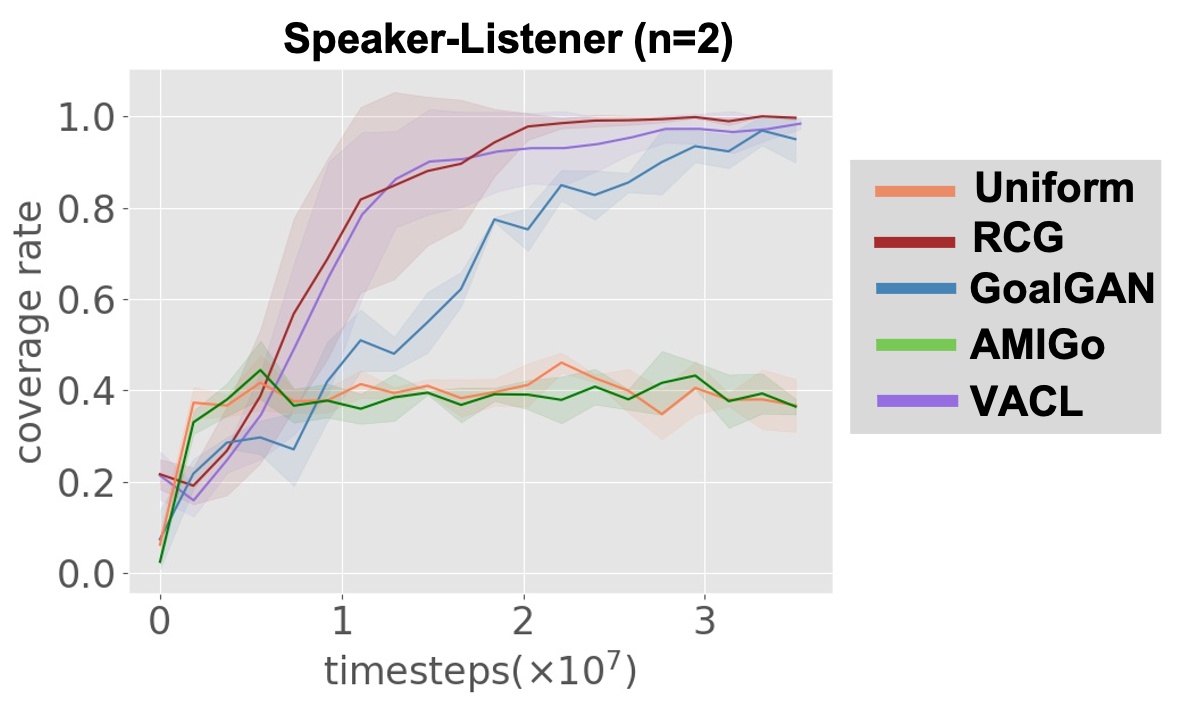}\label{fig:heterogeneous}}
\caption{We compare {\name} with other baselines in \emph{Speaker-Listener}. The results show that our algorithm can also handle the problems with heterogeneous agents}
\end{center}
\end{figure}

\section{Environment Details}
\subsection{Environment Configurations}
In \emph{Simple-Spread}, agents get +4 reward when all the landmarks are occupied and get a reward of -1 when any agents collide with each other. The agents are only penalized by collision once at every timestep. In addition, agents and landmarks are randomly generated in a square area with a side length of 6 in the evaluation process. This is the same task as the Cooperative Navigation game in the MADDPG paper~\cite{lowe2017multi}. We generate entities in a larger map to increase the difficulty. A larger scene means it is more difficult for agents to get positive reward signals in the sparse-rewards setting.  
We construct the \emph{Hard-Spread} scenario by adding walls to separate the room into three parts. Agents can observe the position of landmarks, but the walls are invisible. The \emph{Hard-Spread} mentioned in Fig.~\ref{fig:tech3-eval} is a $10\times2$ rectangle. 

In \emph{Push-Ball}, there are $n$ agents, $n$ balls and, $n$ landmarks. Agents will get a shared reward $2/n$ per timestep when any ball occupies one landmark. If all of the landmarks are occupied, agents will get an extra +1 reward. The collision penalty is the same as \emph{Simple-Spread}. Entities are randomly generated in a $4\times4$ square area in the evaluation process. Push-Ball is our first proposed sparse-rewards task based on the physical kernel of the particle world in the MADDPG \cite{lowe2017multi} paper.

For the tasks in the particle-world environment, we evaluate the performances of our algorithm and baselines with the average coverage of landmarks in the last five evaluation steps within every episode.

In \emph{Lock-and-Return}, if all the boxes are locked, agents get +0.2 reward. Moreover, agents can get a success bonus of +1 if they return to their birthplaces after all the boxes are locked. We test our algorithm and baselines on a floor size of 12, with sides twice as the standard hide-and-seek environment. The environment is fully-observable in our setting for simplicity. In this task, we adopt the mean return rate of agents for comparison in the last five evaluation steps.

In \emph{Ramp-Use}, we hope the seeker to learn how to use ramps to enter the enclosed room and catch the hider (Fig.~\ref{fig:hide-and-seek-demo}). When the hider is spotted, the seeker gets a reward of +1. Otherwise, he gets -1. The environment is fully-observable, and the hider is fixed in the room. We evaluate the performance with the success rate of finding the hider in the last five steps.

\subsection{Definition of $\phi$ and \textbf{GetEasy}(n)}\label{definition-of-phi}
$\phi$ is a vector that contains positions of agents and landmarks in \emph{Simple-Spread}, and positions of agents, balls, and landmarks in \emph{Push-Ball}. In \emph{Lock-and-Return}, it contains positions of agents and boxes (without birthplaces). In \emph{Ramp-Use}, we concatenate positions of the hider, boxes, and the ramp to get $\phi$. As for easy cases that generated by \textbf{GetEasy}(n), we consider those cases where agents are near landmarks in \emph{Simple-Spread}, and agents, balls, and landmarks are close to each other in \emph{Push-Ball} as easy tasks. In \emph{Lock-and-Return}, easy cases have agents near birthplaces, and boxes randomly placed near them. In \emph{Ramp-Use}, easy cases have the ramp right next to the wall, and agents located next to the ramp.

\section{Training Details}
\subsection{The Multi-Agent Particle-World Environment}
In the two particle-world environments, \emph{Simple-Spread} and \emph{Push-Ball}, we use the same network architecture with a self-attention mechanism (Fig.~\ref{fig:attention}) as EPC \cite{long2020evolutionary}. The value network is divided into two parts, which accepts observations of all the agents as input and output the V-value \emph{V}. The first part (right in Fig.~\ref{fig:attention}), named \emph{observation encoder}, is used to encode the observation of agent \emph{j} into $f_{i}(o_j)$. For each agent \emph{j}, \emph{observation encoder} takes observation $o_j$ as input and then applies an entity encoder for each entity type to obtain embedding vectors of all the entities within this type. We attend the entity embedding of agent \emph{j} with all the entities of this type to obtain an attended embedding vector. Then we concatenate all the entity embedding vectors. The observation of agent \emph{j}, $o_{j,j}$, is directly forwarded to a fully connected layer. Finally, both the concatenated vectors and the embedded $o_{j,j}$ are forwarded to a fully connected layer to generate the output, $f_{i}(o_j)$. The output size of entity encoder(green and purple in Fig.~\ref{fig:attention}) and attention layer (orange in Fig.~\ref{fig:attention}) is 64. The second part (left in Fig.~\ref{fig:attention}) accepts the encoding vector $f_i$ of all the agents. We attend all the agent embedding $f_i(o_j)$ to the global attention embedding denoted as $g_i$ (the orange box in the left part in Fig.~\ref{fig:attention}). The ith agent’s own observation is forwarded to a 1-layer fully connected network to get $m_i$. The final layer is a 2-layer fully connected network that takes the concatenation of $m_i$ and the global attention embedding $g_i$ and outputs the final V value. The policy network has a similar structure as the \emph{observation encoder} $f_i(o_i)$, which uses an attention module over the entities of each type in the observation $o_i$. We do not share parameters between the policy and value network.
\begin{figure}[H]
\centering
\includegraphics[width=.7\columnwidth]{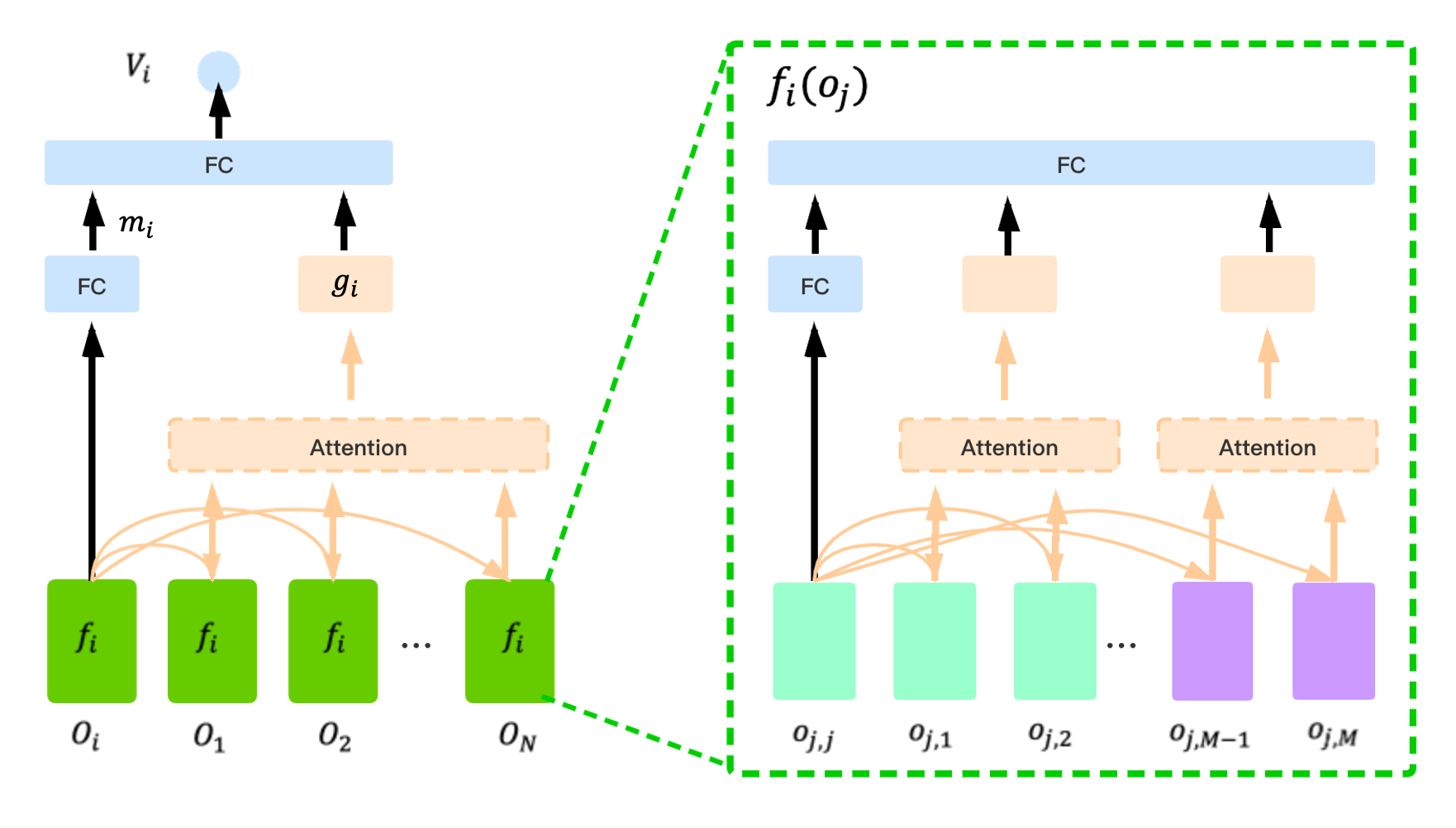}
\caption{Our population-invariant architecture for the value function.}
\label{fig:attention}
\end{figure}

Agents in \emph{Simple-Spread} and \emph{Push-Ball} are trained by PPO. It is worth mentioning that mini-batch size should be changed when the entity size switches from $n_{k-1}$ to $n_{k}$ due to memory requirements. In \emph{Simple-Spread}, we divide each batch into 2 mini-batch for \emph{n=4} and 16 for \emph{n=8}. In \emph{Push-Ball}, the number is 2 for \emph{n=2} and 8 for \emph{n=4}. In addition, easy tasks are different for each training phase. We use the side length \emph{s} to represent the tasks generated by $\textbf{GetEasy}(n_k)$. Entities in the initial tasks are randomly generated in a $\emph{s}\times\emph{s}$ square region. We choose $\emph{s}=0.6$ for \emph{n=4} and $\emph{s}=2.0$ for \emph{n=8} in \emph{Simple-Spread} and \emph{Hard-Spread}. And the side length in \emph{Push-Ball} is $\emph{s}=0.8$ for \emph{n=2} and $\emph{s}=1.6$ for \emph{n=4}. Note that with value quantization, we categorize $q(\phi)$ into an active set $\Qact$ and a solved set $\Qsol$ w.r.t. the task values. Each set uses 2000 particles. Moreover, we define a threshold of convergence. When the evaluation result of the current policy reaches 0.9, we start entity progression. More hyper-parameter settings are shown in Tab.~\ref{tab:hyper-particle}.


\begin{table}[H]
\caption{\name \ hyper-parameters used in the particle-world environment.}
\label{tab:hyper-particle}
\begin{center}
\begin{small}
 \begin{tabular}{lc}
\toprule
Hyper-parameters&Value\\
\midrule
Learning rate & 5e-4\\
Discount rate ($\gamma$) & 0.99\\
GAE parameter ($\lambda$) & 0.95\\
Adam stepsize  & 1e-5\\
Value loss coefficient & 1\\
Entropy coefficient & 0.01\\
PPO clipping & 0.2\\
Parallel threads & 500\\
PPO epochs & 15\\
Reward scale parameter & 0.1\\
Horizon & 70 (\emph{Simple-Spread}), 120 (\emph{Push-Ball})\\
Gradient step ($\epsilon$) & 0.6(\emph{Simple-Spread}), 0.4 (\emph{Push-Ball})\\
Uniform noise ($\delta$) & 0.6(\emph{Simple-Spread}), 0.4 (\emph{Push-Ball})\\
RBF kernel (\emph{h}) & 1 \\
$B_{\textrm{exp}}$ &150 \\
\bottomrule
\end{tabular}
\end{small}
\end{center}
\vspace{-3mm}
\end{table}

\subsection{The Hide-and-Seek Environment}
In the hide-and-seek environment, we use the same attention mechanism as the network in \cite{vaswani2017attention} for the policy and critic network. In addition, we train a recurrent policy and split data into chunks in HnS which is similar to Bowen Baker et al.~\cite{baker2020emergent}. The size of hidden states is 64. In \emph{Lock-and-Return}, the initialization tasks in the active set are defined in the quadrant room where agents are near the boxes. In \emph{Ramp-Use}, the ramp is placed against the wall, and the seeker is near the ramp for initialization. It is worth mentioning that we divide the floor into grids and define gradient step $\epsilon$ and uniform noise $\delta$ with the number of grids in Hns. The grids are $60\times60$ in \emph{Lock-and-Return} and $30\times30$ in \emph{Ramp-Use}. We set $\epsilon$ and $\delta$ to $1$ in the two environments. More hyper-parameter settings are shown in Tab.~\ref{tab:hyper-hns}.

\begin{table}[bt]
\caption{\name \ hyper-parameters used in the hide-and-seek environment.}
\label{tab:hyper-hns}
\begin{center}
\begin{small}
 \begin{tabular}{lc}
\toprule
Hyper-parameters&Value\\
\midrule
Learning rate & 5e-4\\
Discount rate ($\gamma$) & 0.99\\
GAE parameter ($\lambda$) & 0.95\\
Adam stepsize ($\epsilon$) & 1e-5\\
Value loss coefficient & 1\\
Entropy coefficient & 0.01\\
PPO clipping & 0.2\\
Parallel threads & 300\\
PPO epochs & 15\\
Chunk length & 40 (\emph{Lock-and-Return}),10 (\emph{Ramp-Use}) \\
Mini-batch size & 1(\emph{Lock-and-Return}),2 (\emph{Ramp-Use}) \\
Horizon & 60\\
RBF kernel (\emph{h}) & 1 \\
$B_{\textrm{exp}}$ &200 \\
\bottomrule
\end{tabular}
\end{small}
\end{center}
\vspace{-5mm}
\end{table}

\subsection{Additional Implementation  Details}

\textbf{Diversified Queue:} We use two fixed-size queues to maintain $\Qact$ and $\Qsol$ for computation and memory efficiency. Note that SVGD naturally suggests a diversified data queue, which is our implementation by default. Concretely, We define a distance measure for a task $\phi$ and a task set $\mathcal{Q}$ by the average top-$k$ minimum distance between $\phi$ and each element in $\mathcal{Q}$, i.e.,
\begin{align*}
    D(\phi,\mathcal{Q})=\textrm{mean}\left(\min\left(k;\left\{\|\phi-\phi'\|:\phi'\in\mathcal{Q}\right\}\right)\right),
\end{align*}
where $k=5$ in this paper.
Then when the size of $\mathcal{Q}$ exceeds the bound $L$, we remove samples w.r.t. their intra-set distance measures:
\begin{align*}
    \mathcal{Q}\leftarrow\mathcal{Q}\setminus\min\left(|\mathcal{Q}|-L;\{D(\phi_j,\mathcal{Q}):\phi_j\in\mathcal{Q}\}\right),
\end{align*}
We set the queue capacity $L$ to 2000 for both $\Qact$ and $\Qsol$. Note that this greedy strategy only works when the overflow size $|\mathcal{Q}|-L$ is small, e.g., in our setting. When a large number of samples overflows, the distances need to be synced during the deletion process.
Finally, we also tested the use of standard FIFO queue and didn't notice any significant performance drop even using a FIFO queue, which can potentially be another practical alternative for even better computational efficiency. 

\textbf{The Ratio of Active and Solved Tasks:} We follow the convention of curriculum learning to sample more training tasks from $\Qact$ for effective training. The ratio of active and solved tasks that we use in the training batch is 95\% to 5\%. We also compare {\name} with a baseline version with uniform sampling from $\Qact$ and $\Qsol$ (Uniform sampling) in \emph{Simple-Spread} with $n=4$ and \emph{Push-Ball} with $n=2$. The results are shown in Fig.~\ref{fig:VACL-vs-uniform}, where {\name} performs slight better which is also consistent with the principle of curriculum learning. 
\begin{figure}[H]
\begin{center}
\centerline{\includegraphics[width=0.7\columnwidth]{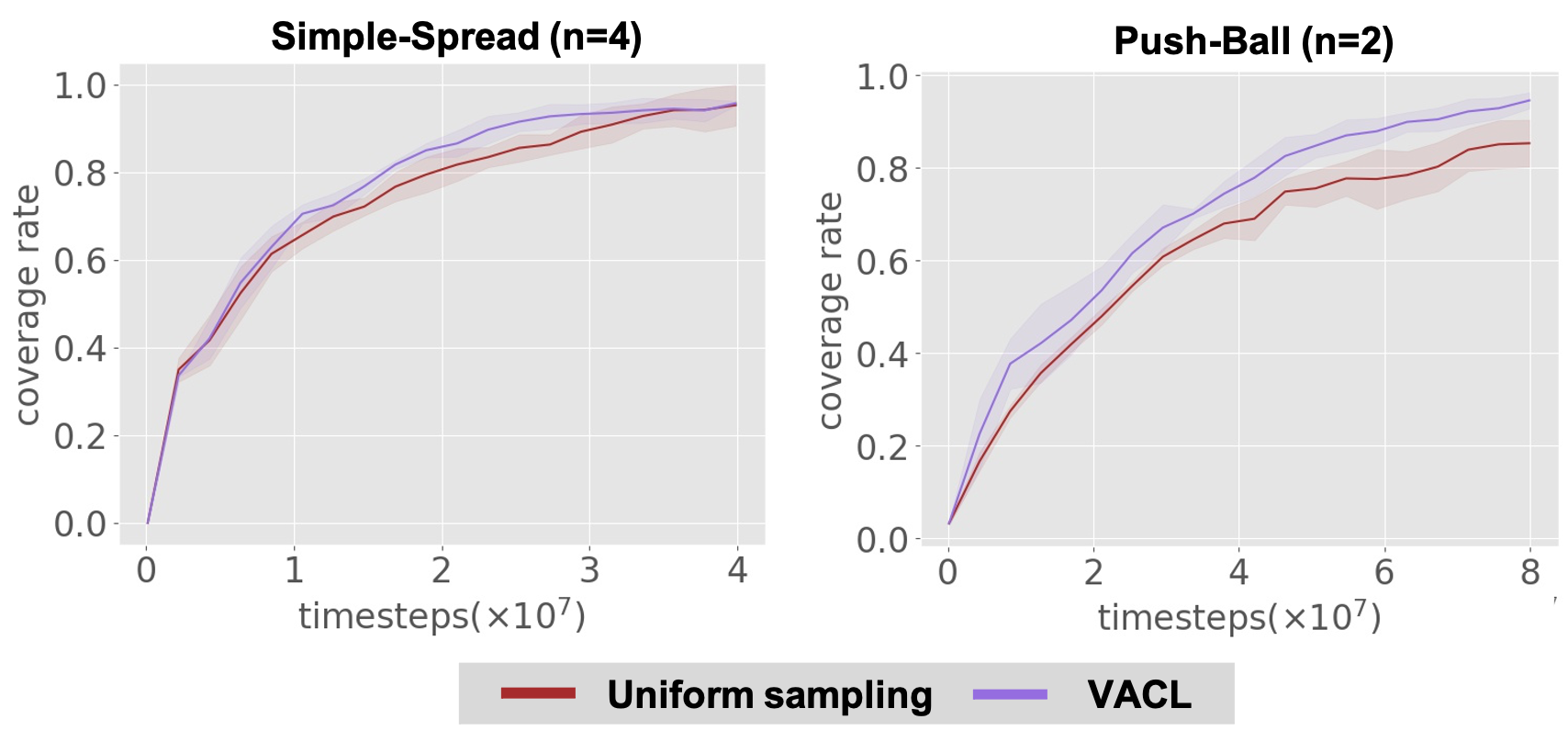}}
\caption{Comparison of {\name} and the uniform sampling method in \emph{Simple-Spread} and \emph{Push-Ball}}
\label{fig:VACL-vs-uniform}
\end{center}
\end{figure}

\end{document}